
\documentclass{article}

\usepackage{microtype}
\usepackage{graphicx}
\usepackage{subfigure}
\usepackage{booktabs} 

\usepackage{url}            %
\usepackage{amsfonts}       %
\usepackage{dsfont}
\usepackage{nicefrac}       %
\usepackage{xcolor}         %
\usepackage{threeparttable}
\usepackage{colortbl}
\usepackage{multirow}
\definecolor{lightgray}{RGB}{235, 236, 240}

\usepackage{amsthm}
\usepackage{thmtools, thm-restate}
\declaretheorem[numberwithin=section]{thm}
\declaretheorem[sibling=thm]{theorem}
\declaretheorem[sibling=thm]{lemma}

\declaretheorem[]{assumption}
\declaretheorem[]{definition}

\usepackage{amsmath}
\usepackage{amssymb}
\usepackage{mathtools}
\usepackage{bm}

\newcommand{\M}{\mathcal{M}}

\newcommand{\cO}{\mathcal{O}}
\newcommand{\D}{\mathcal{D}}
\newcommand{\C}{\mathcal{C}}
\newcommand{\X}{\mathcal{X}}
\newcommand{\Prob}{\mathcal{P}}
\renewcommand{\S}{\mathcal{S}}
\newcommand{\A}{\mathcal{A}}

\newcommand{\Hist}{\mathcal{H}}

\newcommand{\R}{\mathcal{R}}
\newcommand{\F}{\mathcal{F}}

\DeclareMathOperator*{\EV}{\mathbb{E}}
\DeclareMathOperator*{\argmax}{arg\,max}
\DeclareMathOperator*{\argmin}{arg\,min}
\DeclareMathOperator{\KL}{KL}
\DeclareMathOperator{\TV}{TV}
\newcommand{\pibo}{\pi_{\text{BO}}}
\newcommand{\poly}{\mathsf{poly}}

\newcommand{\AssFam}{strong identifiability}

\definecolor{blued}{RGB}{50, 70, 200}

\usepackage{hyperref}



\usepackage[accepted]{icml2024}


\usepackage[capitalize,noabbrev]{cleveref}


\usepackage[textsize=tiny]{todonotes}
\usepackage{enumitem}

\icmltitlerunning{Test-Time Regret Minimization in Meta Reinforcement Learning}

\begin{document}

\twocolumn[
\icmltitle{Test-Time Regret Minimization in Meta Reinforcement Learning}




\begin{icmlauthorlist}
\icmlauthor{Mirco Mutti}{yyy}
\icmlauthor{Aviv Tamar}{yyy}
\end{icmlauthorlist}

\icmlaffiliation{yyy}{Technion -- Israel Institute of Technology, Haifa, Israel}

\icmlcorrespondingauthor{Mirco Mutti}{mirco.m@technion.ac.il}

\icmlkeywords{Machine Learning, ICML}

\vskip 0.3in
]



\printAffiliationsAndNotice{\icmlEqualContribution} 

\begin{abstract}
Meta reinforcement learning sets a distribution over a set of tasks on which the agent can \emph{train} at will, then is asked to learn an optimal policy for any \emph{test} task efficiently. In this paper, we consider a \emph{finite} set of tasks modeled through Markov decision processes with various dynamics.
We assume to have endured a long training phase, from which the set of tasks is perfectly recovered, and we focus on \emph{regret minimization} against the optimal policy in the unknown test task. 
Under a separation condition that states the existence of a state-action pair revealing a task against another, \citet{chen2021understanding} show that $\cO(M^2 \log(H))$ regret can be achieved, where $M, H$ are the number of tasks in the set and test episodes, respectively. In our first contribution, we demonstrate that the latter rate is nearly optimal by developing a novel \emph{lower bound} for test-time regret minimization under separation, showing that a linear dependence with $M$ is unavoidable. Then, we present a family of stronger yet reasonable assumptions beyond separation, which we call \emph{\AssFam}, enabling algorithms achieving fast rates $\log (H)$ and sublinear dependence with $M$ simultaneously. Our paper provides a new understanding of the statistical barriers of test-time regret minimization and when fast rates can be achieved. 
\end{abstract}

\section{Introduction}

\begin{table*}[t!]
\setlength{\tabcolsep}{4pt}
\begin{threeparttable}
\begin{small}
\begin{tabular}{c c c c}
\toprule
\begin{sc} Structure \end{sc} & \begin{sc}Reachability \end{sc}& \begin{sc}Regret Upper Bound  \end{sc}& \begin{sc}Regret Lower Bound\end{sc} \\
\midrule
 Separation (Ass.~\ref{ass:separation_mdp}) & Reachable (Ass.~\ref{ass:reachability})  & $\cO \left( \frac{T M^2 \log (MH)}{\lambda^4} \right)$ (Thr.~\ref{thr:mdp_upper_bound_separation})  & $\Omega \left( \frac{T M \log (H)}{\lambda} \log \left( \frac{1}{\delta} \right) \right)$ (Thr.~\ref{thr:mdp_lower_bound})\textsuperscript{*} \\
\rowcolor{lightgray} Clustering (Ass.~\ref{ass:mdp_clustering}) & Cluster Reachable (Ass.~\ref{ass:mdp_clustering_reachability})  & $\cO \left( \frac{T (K^2 + N^2) \log (N H) }{\lambda^4} \right)$ (Thr.~\ref{thr:mdp_upper_bound_clustering})\textsuperscript{*}  &  \\
 Tree Structure (Ass.~\ref{ass:mdp_tree}) & Strongly Reachable (Ass.~\ref{ass:strong_reachability})  & $\cO \left( \frac{T d \log (dH)}{\lambda^4} \right)$ (Thr.~\ref{thr:mdp_upper_bound_tree})\textsuperscript{*}  & \\
 \rowcolor{lightgray} Revealing Policies (Ass.~\ref{ass:mdp_revealing_policies}) & Reachable (Ass.~\ref{ass:reachability})  & $\cO \left( \frac{T I \log (MH)}{\lambda^4} \right)$ (Thr.~\ref{thr:mdp_upper_bound_revealing_policies})\textsuperscript{*}  &  \\
\bottomrule
\end{tabular}
\end{small}
\end{threeparttable}
\vspace{-7pt}
\caption{Overview of the theoretical results. $T$ is the horizon of an episode, $M$ the number of tasks, $H$ the number of episodes, $\lambda$ a separation parameter, $\delta$ a confidence, $K$ the number of clusters with size $N$, $d = \log_{1/\beta} M$ the depth of a binary tree with entropy factor $\beta$, $I$ the number of revealing policies. Note that the rates are simplified to highlight relevant factors. Novel results are marked with *.  }
\label{table:regret_overview}
\vspace{-7pt}
\end{table*}

Reinforcement Learning~\citep[RL,][]{sutton2018reinforcement} is a popular tool for learning an optimal decision policy through sampled interactions with a Markov Decision Process (MDP), a general framework encompassing countless applications, ranging from robotics~\cite{xu2023dexterous, kaufmann2023champion} to algorithms design~\cite{fawzi2022discovering}, conversational agents~\cite{stiennon2020learning}, and others.

Although powerful, the efficiency of RL is a long-standing issue. The theory says that the \emph{regret} of a RL algorithm, i.e., the difference between the value of the deployed policy and the optimal policy in hindsight, inescapably scales with $\sqrt{H}$ in the worst case~\cite{jaksch2010near, osband2016lower}, $H$ being the total number of episodes of interactions with the MDP.  Even if the real world is arguably better behaved than the worst-case MDP, the most successful algorithms~\citep[][]{schulman2015trust, mnih2015human} still take thousands of interaction episodes to learn a competitive policy in simulation, which draws pessimism for RL to be applied for learning in the real world.

A promising direction to improve RL efficiency is \emph{meta RL}~\cite{ghavamzadeh2015bayesian}, in which a distribution over the set of tasks we can face is considered. In meta RL, we first have a \emph{training} phase on some tasks sampled from the latter distribution, for which the learning efficiency is less of an issue (e.g., a simulator is available). Then, we exploit the collected knowledge to achieve faster learning in a \emph{test} task, which is assumed to come from the same distribution. 

Much of the previous work in meta RL focuses on algorithms for the training stage~\cite{duan2016rl, finn2017model, rakelly2019efficient, zintgraf2019varibad, zintgraf2021varibad}, or analyse generalization of the trained model to the test task~\cite{simchowitz2021bayesian, tamar2022regularization, rimon2022meta, zhao2022effectiveness, zisselman2023explore}.

Here we study meta RL from a different perspective. We assume to have spent infinite time in the training phase, such that the task distribution can be recovered (we mean the full specifications of all the MDPs in the set, not just the task distribution itself), and we aim to minimize the regret against the optimal policy in the test task. Especially,

{\centering
    \textit{ Does perfect meta RL training provably improve the 
    learning efficiency on the test task against standard RL?}
\par}
We believe that a positive answer is an essential theoretical ground for motivating meta RL, as there is little incentive to undergo a costly training (at least in terms of computation) without guarantees of improved efficiency on the test task.

Even in simple settings, in which the distribution is supported on a finite set of $M$ tasks, meta RL provides little hope, as the regret still scales with $\sqrt{H}$ in the worst case, with only marginal gains in the statistical efficiency w.r.t. standard RL~\cite{chen2021understanding, ye2023power}.
Nonetheless, under a common \emph{separation} assumption on set of tasks~\citep{chen2021understanding, kwon2021rl}, i.e., there exists at least one reachable state-action pair that \emph{reveals} one task against the others, the prospects of meta RL become brighter.
\citet{chen2021understanding} show that $\cO(M^2 \log (MH))$ regret can be achieved by first identifying the test task (with high probability) and then deploying the best policy for the latter. Their approach is somewhat wasteful in the identification, as the algorithm performs a sequence of \emph{one-vs-one} tests on candidate tasks, which induces the $M^2$ factor. However, it is unclear if the latter is necessary or better algorithms could be developed.

In this paper, we provide a nuanced study of the statistical barriers of test-time regret minimization in meta RL.
First, we provide a lower bound $\Omega (TM \log(H))$ for test-time regret minimization under separation, where $T$ is the horizon of an interaction episode with the test MDP. Our lower bound demonstrates that the ``wasteful'' algorithm by~\citet{chen2021understanding} is nearly optimal and one factor $M$ is unavoidable under separation alone. The way the lower bound is derived philosophically confirms that probing the MDP to first identify the test task and then exploit the collected information is not just reasonable but also optimal. Nevertheless, we note that a linear dependence with $M$ is not desirable if we aim to scale meta RL to large task distributions.

Thus, we formulate stronger yet reasonable requirements under which the latter limitation may be overcome. To formalize our desiderata, we call \emph{\AssFam} a family of assumptions that allow for fast rates in $H$ and $M$ simultaneously, for which we provide three instances.

The first assumes the tasks can be partitioned in coherent \emph{clusters}, such that the algorithm can efficiently identify the cluster to which the test task belongs, and then identify the latter within that cluster down the hierarchy. Clusters are fairly natural in practice, e.g., in a meta RL problem applied to movie recommendation systems, in which the set of tasks consists of different users with specific tastes. We could first provide recommendations to probe whether the test user especially likes a movie genre, and then provide fine-grained recommendations within the genre catalogue.

The second assumes that the test task can be identified following a \emph{tree structure}. This means that the algorithm can \emph{split} a subset from the current set of candidate tasks by collecting information on a single state-action pair, which allows to eliminate a large chunk of candidates at every iteration. This may happen in practice whenever the test task can be identified testing a sequence of revealing characteristics.

The third {\AssFam} assumption we analyse is admitting the presence of a small number of \emph{revealing policies}, which allow for collecting highly informative data irrespective of the test task, then to perform the identification without further interactions. To provide intuition on such policies, let us think of a meta RL problem in which the set of tasks consists of morphologies of terrains we aim to explore without daylight. A revealing policy would light a torch to probe the surroundings before taking the right direction for the test morphology.

\textbf{Contributions.}~~
Our main contributions are:
\begin{itemize}[topsep=-2pt, noitemsep, leftmargin=*]
    \item We revise an analysis of domain randomization~\citep{chen2021understanding} through the lenses of meta RL, adapting their result on sim-to-real gap into a regret upper bound $\cO(M^2 \log(M H))$ for our setting (Section~\ref{sec:regret_under_separation}); 
    \item We derive a \emph{lower bound} $\Omega (TM \log(H))$ to test-time regret minimization under separation (Section~\ref{sec:lower_bound}) through original techniques that formally link our problem to Best Policy Identification~\citep[BPI,][]{fiechter1994efficient}. The proof requires a \emph{tailored} lower bound to the sample complexity of BPI, which can be of independent interest (Appendix~\ref{apx:best_policy_identification});
    \item We present structural assumptions beyond separation, called \emph{\AssFam} (Section~\ref{sec:beyond_separation}). Those include: A \emph{clustering} assumption that leads to a regret upper bound of order $\cO(T (K^2 + N^2) \log(N H))$, where $K$ is the number of clusters and $N$ is the size of the largest cluster; A \emph{tree structure} assumption that leads to a regret of order $\cO (T d \log (d H) )$, where $d = \log_{1 / \beta} M$ is the depth of the tree and $\beta$ is the splitting factor; A \emph{revealing policies} assumption that leads to a regret of order $\cO (T I \log (M H))$, where $I$ is the number of revealing policies;
    \item We provide additional sharp rates to the test-time regret for meta learning in bandits (Appendix~\ref{apx:bayesian_bandits}).
\end{itemize}
Complete proofs of the theorems are in Appendix~\ref{apx:proofs}.

\section{Problem Formulation}
\label{sec:problem}
We first present the necessary background on MDPs and meta RL (Sections~\ref{sec:problem_mdp},~\ref{sec:problem_meta_rl}) before formulating the learning problem we will address in the paper (Section~\ref{sec:problem_test_time_regret}).

\textbf{Notation.}~~Let $\A$ a space of size $|\A|$ with elements $a \in \A$. Then, $\Prob (\A) := (p \in [0, 1]^{|\A|} | \sum_{a \in \A} p(a) = 1)$ is the simplex for a finite $\A$. Let $p, q \in \Prob (\A)$, their $\ell 1$-distance is $\| p - q  \|_1 = \sum_{a \in \A} |p (a) - q(a)|$, their total variation is $\TV (p, q) = \sup_{a \in \A} |p(a) - q(a)|$, their Kullback-Leibler divergence is $\KL (p, q) = \sum_{a \in \A} p(a) \log (p (a) / q(a))$ and $\KL_{p | q} := \KL (p, q) + \KL (q, p)$. We will denote sets and sequences as $(a_i)_{i \in [I]} := (a_1, \ldots, a_I)$, where $[I] := (1, \ldots, I)$ for some constant $I \in \mathbb{N}$.

\subsection{Markov Decision Processes and RL}
\label{sec:problem_mdp}
A finite-horizon time-homogeneous Markov Decision Process~\citep[MDP,][]{puterman2014markov} is defined by a tuple\footnote{The meaning of the subscripts will become clear later.} $\M_i := (\S, \A, p_i, r_i, s_1, T)$ where $\S$ is a finite set of states ($S = |\S|$), $\A$ is a finite set of actions ($A = |\A|$), $p_i: \S \times \A \to \Prob (\S)$ is a transition model such that $p_i (s' | s, a)$ denotes the conditional probability of transitioning to $s'$ taking action $a$ in state $s$, $r_i: \S \times \A \to [0, 1]$ is a reward function such that $r_i (s, a)$ is the reward collected by taking action $a$ in $s$, $s_1$ is the initial state,\footnote{Note that unique initial state is without loss of generality, as we can accommodate an initial state distribution $\mu \in \Prob(\S)$ through a fictitious state $s_0$ such that $p(s|s_0, a) = \mu (s), \forall a \in \A$.} and $T < \infty$ is the horizon of an episode.

An episode of interaction between an agent and the MDP $\M_i$ goes as follows. At each step $t \in [T]$, the agent observes $s_t \in \S$ and takes $a_t \in \A$. The environment transitions to $s_{t + 1} \sim p_i (\cdot | s_t, a_t)$ and the agent collects $r_i (s_t, a_t)$. 
Hence, an episode can be summarized through a sequence $\tau = (s_t, a_t)_{t \in [T]}$ called a \emph{trajectory}.

The agent selects its actions by means of a non-stationary Markovian \emph{policy} $\pi := (\pi_t: \S \to \Prob (\A))_{t \in [T]} \in \Pi$ where $\pi_t (a | s)$ denotes the conditional probability of action $a$ in state $s$ at time step $t$, and $\Pi$ is the set of policies. We define the \emph{value} at step $t$ of playing policy $\pi$ in state $s$ of $\M_i$ as
\begin{equation*}
    V_{it}^\pi (s) := \EV_{\pi, \M_i} \left[ \sum_{ t' = t}^T r_i (s_{t'}, a_{t'}) \ \Big| \ s_t = s \right],
\end{equation*}
where the expectation is on all the sources of randomness, i.e., the action selection induced by $\pi$ and the state transitions induced by $p_i$, which may be stochastic.
We further denote $V_i (\pi) := V_{i1} (s_1)$ the value of the policy in the initial state. The objective function of the agent in $\M_i$ can then be written as $\max_{\pi \in \Pi} V_i (\pi)$, where we denote as $\pi^*$ the policy attaining the maximum and $V^*_i := V_i (\pi^*)$.
RL~\cite{sutton2018reinforcement} is a paradigm for learning an (approximately) optimal policy $\pi$, such that $V_i^* - V_i (\pi) \leq \epsilon$ for some $\epsilon > 0$, from sampled interactions with an \emph{unknown} MDP $\M_i$.

\subsection{Meta Reinforcement Learning}
\label{sec:problem_meta_rl}
Meta RL~\cite{duan2016rl}, initially introduced by~\citet{schmidhuber1987evolutionary}, extends the RL paradigm to a set of $M$ MDPs $\M := (\M_i)_{i \in [M]} = (\S, \A, p_i, r_i, s_1, T)_{i \in [M]}$ having the same $\S, \A, s_1, T$, but possibly different transition model $p_i$ and reward $r_i$. Just like in RL, the latter MDPs are typically unknown to the agent. In a process called \emph{training}, the agent can collect interactions with a number of tasks\footnote{We are going to use the term \emph{task} and \emph{MDP} interchangeably.} drawn from $\M$ according to a \emph{task distribution} $P \in \Prob (\M)$ such that the probability of drawing $\M_i$ is $P(\M_i)$. In training, the agent collects information into a \emph{prior} model, e.g., a policy, a model of transitions, or an algorithm, that is then used to address a RL problem on a \emph{test} task $\M_i$ assumed to be drawn from the same task distribution $P$.

Bayesian RL~\cite{ghavamzadeh2015bayesian} formulates the target of the training as learning a \emph{Bayes-optimal} policy\footnote{Note that the Bayes-optimal policy is history-dependent in general~\cite{ghavamzadeh2015bayesian}.} $\pibo \in \argmax_{\pi \in \Pi} \EV_{\M_i \sim P} [V_{i} (\pi)]$ under the distribution $P$. As we describe below, here we study meta RL from a \emph{frequentist} perspective rather than a Bayesian formulation.

\subsection{Test-Time Regret Minimization}
\label{sec:problem_test_time_regret}
In this paper, instead of focusing on the training phase of meta RL, we assume perfect knowledge of the set of tasks $\M$, such that every transition model $p_i$ and reward $r_i$ are fully known to the agent.
With this prior knowledge, we aim to minimize the \emph{test-time regret} over $H$ episodes
\begin{equation}
\label{eq:test-time-regret}
    R_H (\M_i, \mathbb{A}) := \EV \left[ \sum_{h = 1}^H V^*_i - V_i (\pi_h) \right]
\end{equation}
where $\M_i \in \M$ is \emph{any} test task, $\pi_h$ is the policy deployed in episode $h$ by algorithm $\mathbb{A}$, and the expectation is over realizations of the episodes $1, \ldots, H$ taken from $\M_i$.  The motivation for this objective is that comparing against the optimal policy \emph{for} the test task, instead of an optimal policy \emph{on average} over the task distribution~\cite{ye2023power}, gives a regret measure that is robust to the worst-case task, arguably a minimal requirement given the perfect training assumption.
We see the latter as a necessary first step towards a more realistic setting with \emph{approximate} knowledge of $\M$ only. If we cannot succeed with the former, the latter is hopeless.

Two other important observations are in order. First, in this paper we study the regret of \emph{adaptive} algorithms $\mathbb{A}$, that is, the deployed policy is Markovian within an episode, but can change from episode to episode. This is only slightly restrictive as the policy $\pi_h$ is computed having the full history of realizations in previous episodes $\Hist_h = ((s_{t,h},a_{t,h},r_{t,h})_{t \in [T]})_{h' \in [h]} $, which means an algorithm $\mathbb{A}$ corresponds to a non-Markovian policy with low switching cost~\cite{bai2019provably}.
Second, the expression in~\eqref{eq:test-time-regret} is different from the notion of \emph{Bayesian regret} that is common Bayesian RL~\citep{ghavamzadeh2015bayesian}, in which the regret is taken in expectation over the task distribution $P$ instead of taking the worst-case task. As a result, the optimal algorithm $\mathbb{A}$ for the test-time regret does not correspond to the Bayes-optimal policy in general, although it holds $R_H (\M_i, \pibo) = \cO( M \cdot R_H(\M_i, \mathbb{A}))$ for any algorithm $\mathbb{A}$ from~\citep[][Lemma 1]{chen2021understanding}.

\section{Previous Fast Rates for Test-Time Regret}
\label{sec:regret_under_separation}
In this section, we discuss the known results for the test-time regret minimization objective we described above. In this paper we especially care for \emph{fast rates}, i.e., those settings in which the knowledge of the set of tasks and its structure allow to overcome the statistical barrier for regret minimization in finite-horizon RL, which we know is of order $\Theta(\poly(T, S, A)\sqrt{H})$ from lower bounds and minimax algorithms~\cite{osband2016lower, azar2017minimax}.

\citet{chen2021understanding} address a regret minimization problem that is very close to our test-time regret formulation, although their narrative is centered around domain randomization rather than meta RL. When the set of tasks is finite, they provide a lower bound of order $\Omega (\sqrt{D M H})$, in which $D$ is the diameter of a communicating infinite-horizon MDP~\citep[see Assumption 1 and Theorem 3 in][]{chen2021understanding}.\footnote{Note that the results in \citet{chen2021understanding} are given for a slightly different setting (detailed comparisons are in Section~\ref{sec:related_works}). We will explicitly adapt to our setting the most relevant results.}
The latter result demonstrates that additional assumptions are needed to break the $\sqrt{H}$ barrier of RL. 

To this end, \citet{chen2021understanding} introduce a separation condition within the set of tasks $\M$. Formally, 
\begin{assumption}[$\lambda$-separation~\citep{chen2021understanding}]
\label{ass:separation_mdp}
    For any $\M_i, \M_j \in \M$, there exists $(s, a) \in \S \times \A$ such that $\| (p_i - p_j)(\cdot | s, a) \|_1 \geq \lambda.$
\end{assumption}
The latter assumption guarantees the existence of a \emph{revealing} state-action pair to tell apart a task from another. This allows to design an algorithm which repeatedly tests that revealing state-action pair to identify the test task efficiently. First, we need to further make sure that the revealing state-action can be reached with meaningful probability.\footnote{This is the technical adaptation of the communicating MDP assumption in~\citet{chen2021understanding} for the finite-horizon setting.}
\begin{definition}
    Let $X (s | \M_i, \pi)$ denote the random variable of the first time step in which the state $s \in \S$ is reached by playing policy $\pi$ in the MDP $\M_i$. Let $X(s,a | \M_i, \pi)$ be the analogous for state-action pairs $(s, a) \in \S \times \A$.
\end{definition}
\begin{assumption}[Reachable MDPs]
\label{ass:reachability}
    An MDP $\M_i$ is \emph{reachable} if it holds $\min_{\pi \in \Pi} \EV [X (s | \M_i, \pi)] \leq T / 2, \forall s \in \S$.
\end{assumption}

With the combination of the latter assumptions, we can directly adapt the algorithmic solution in~\citet[][]{chen2021understanding} to the finite-horizon setting.\footnote{We refer to Algorithm 1 in~\citet{chen2021understanding}, which we name here the ``Identify-then-Commit'' algorithm.} We report the pseudocode of the resulting procedure in Algorithm~\ref{alg:mdp_separation}.

\begin{algorithm}[t]
    \caption{Identify-then-Commit \citep{chen2021understanding}}
    \label{alg:mdp_separation}
\begin{small}
    \begin{algorithmic}[1]
        \STATE \textbf{input} set of MDPs $\D$ and visitation count $n$
        \WHILE{$|\D| > 1$}
            \STATE Draw $\M_1, \M_2$ from $\D$ randomly
            \STATE Let $(\bar s, \bar a) \in \argmax_{(s, a) \in \S \times \A} \| (p_1 - p_2) (\cdot | s, a) \|_1$
            \STATE Call  Algorithm~\ref{alg:mdp_sampling} with $\D, (\bar s, \bar a), n$ to collect $\X$
            \IF{$\exists s' \in \X: p_2 (s' | \bar s, \bar a) = 0$ or $\prod_{s' \in \X} \frac{p_1 (s' | \bar s, \bar a)}{p_2 (s' | \bar s, \bar a)} \geq 1$}
                \STATE Eliminate $\M_2$ from $\D$
            \ELSE
                \STATE Eliminate $\M_1$ from $\D$
            \ENDIF
        \ENDWHILE
        \STATE Take $\hat \M \in \D$ and run $\hat \pi \in \argmax_{\pi \in \Pi} V_{\hat \M} (\pi)$ for the remaining episodes
    \end{algorithmic}
\end{small}
\end{algorithm}

\begin{algorithm}[t]
    \caption{Sampling Routine}
    \label{alg:mdp_sampling}
\begin{small}
    \begin{algorithmic}[1]
        \STATE \textbf{input} set of MDPs $\D$, state-action pair $(\bar s, \bar a)$, and visitation count $n$
        \STATE Initialize count $N_{\bar s \bar a} = 0$ and set $\X = \emptyset$
        \WHILE{$N_{\bar s \bar a} < n$}
            \FOR{$\M_i \in \D$}
                \STATE Run the policy $\pi_i \in \argmin_{\pi \in \Pi} \EV [X (\bar s | \M_i, \pi)]$ for two episodes
                \IF{$\bar s$ is reached}
                    \STATE Take action $\bar a$ and collect the next state $s'$
                    \STATE Update $N_{\bar s \bar a} = N_{\bar s \bar a} + 1, \X = \X \bigcup (s')$
                \ENDIF
            \ENDFOR
        \ENDWHILE
        \STATE \textbf{output} the set $\X = (s'_1, \ldots, s'_n)$ 
    \end{algorithmic}
\end{small}
\end{algorithm}

The procedure consists of two stages: An ``Identify'' stage aiming at identifying the test task with high probability (lines 2-11) and a ``Commit'' stage in which the collected information is exploited (line 12). The ``Identify'' stage works as follows. At each iteration, a pair of MDPs are drawn from the set of potential test tasks (line 3). A sampling routine (Algorithm~\ref{alg:mdp_sampling}) is invoked (line 5) to collect samples from the state-action pair where the transition models of the two tasks differ the most (see line 4). The collected information is used to eliminate the task that is less likely to be the test task within the drawn pair (lines 6-10). The ``Identify'' stage ends when the set of potential tasks $\D$ is reduced to a single candidate. The ``Commit'' stage then runs the optimal policy of the candidate task for the remaining episodes.

We can provide the following regret upper bound for Algorithm~\ref{alg:mdp_separation} by adapting~\citep[][Theorem 1]{chen2021understanding}.
\begin{restatable}[\citealt{chen2021understanding}]{theorem}{mdpSeparationUpperBound}
    Let $\M$ be a set of MDPs for which Assumption~\ref{ass:separation_mdp},~\ref{ass:reachability} hold. For any $\M_i \in \M$, we have
    \begin{equation*}
        R_H (\M_i, \mathbb{A}) = \cO \left( \frac{T M^2 \log (MH) \log^2 (SMH / \lambda)}{\lambda^4} \right)
    \end{equation*}
    where $\mathbb{A}$ is Algorithm~\ref{alg:mdp_separation} with inputs $\D = \M$ and $n = \frac{c \log^2 (S M H / \lambda) \log (MH)}{\lambda^4}$ for a sufficiently large constant $c$.
    \label{thr:mdp_upper_bound_separation}
\end{restatable}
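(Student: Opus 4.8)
The plan is to analyze the regret by decomposing it into the cost incurred during the ``Identify'' stage (lines 2-11) and the cost incurred during the ``Commit'' stage (line 12), and showing that with high probability the algorithm commits to the true test task, so that the ``Commit'' cost vanishes. First I would bound the number of iterations of the outer \textbf{while} loop. Each iteration draws a pair $\M_1, \M_2$ and eliminates exactly one of them, so after at most $M - 1$ iterations the candidate set $\D$ is reduced to a single task. For each iteration, the inner sampling routine (Algorithm~\ref{alg:mdp_sampling}) must collect $n$ samples from the revealing pair $(\bar s, \bar a)$. Under Assumption~\ref{ass:reachability}, the optimal reaching policy reaches $\bar s$ in expected time at most $T/2$, so by a Markov-inequality argument the pair is reached with constant probability within two episodes of running $\pi_i$; hence collecting $n$ samples costs $\cO(n)$ episodes per candidate MDP, i.e.\ $\cO(Mn)$ episodes per iteration, for $\cO(M^2 n)$ episodes total across all $M - 1$ iterations. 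Multiplying by the per-episode regret, which is at most $T$ (rewards lie in $[0,1]$ over a horizon $T$), gives an ``Identify'' cost of $\cO(T M^2 n)$.

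The heart of the argument is the correctness of the likelihood-ratio test in lines 6-10, which determines the value of $n$. The key step is to show that, for a single pairwise comparison, if one of $\M_1, \M_2$ is the true test task, then the test eliminates the \emph{wrong} candidate with probability at most $\delta / \binom{M}{2}$ (or similar), so that a union bound over all $\cO(M^2)$ pairwise comparisons guarantees correctness with probability $1 - \delta$. The test compares the product of likelihood ratios $\prod_{s' \in \X} p_1(s'|\bar s, \bar a)/p_2(s'|\bar s, \bar a)$ against $1$; since $(\bar s, \bar a)$ is chosen to maximize $\|(p_1 - p_2)(\cdot|\bar s, \bar a)\|_1 \geq \lambda$ by separation, the two conditional distributions differ in total variation, hence in KL divergence, by an amount lower-bounded in terms of $\lambda$. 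I would invoke a Chernoff/Sanov-type concentration bound on the log-likelihood ratio: with $n$ samples from the true distribution, the empirical mean log-likelihood ratio concentrates around its expectation (a KL divergence bounded below by roughly $\lambda^2$ by Pinsker's inequality), so the probability that the ratio points the wrong way decays exponentially as $\exp(-c' n \lambda^2)$, possibly modulated by the $\log(S/\lambda)$ factors arising from controlling the range of individual log-likelihood terms. Setting this equal to $\delta/\binom{M}{2}$ and solving for $n$ yields the stated $n = \tfrac{c \log^2(SMH/\lambda)\log(MH)}{\lambda^4}$, where the choice $\delta \sim 1/H$ contributes the $\log(MH)$ factor and the range-control contributes the squared logarithm.

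Finally, I would combine the pieces. On the good event (probability $1 - \delta$) the algorithm identifies the correct test task, commits to its optimal policy, and incurs zero regret thereafter, so the total regret is the $\cO(T M^2 n)$ identification cost; on the bad event (probability $\delta$) the regret is at most $T H$, contributing $\delta T H = \cO(T)$ when $\delta \sim 1/H$. Substituting the value of $n$ gives
\begin{equation*}
    R_H(\M_i, \mathbb{A}) = \cO\left( \frac{T M^2 \log(MH) \log^2(SMH/\lambda)}{\lambda^4} \right),
\end{equation*}
as claimed. The main obstacle I anticipate is the careful control of the log-likelihood ratio test: the individual terms $\log(p_1/p_2)$ can be unbounded (indeed the test has a special case for $p_2(s'|\bar s,\bar a) = 0$), so I expect the delicate step to be truncating or clipping the likelihood ratios and showing that the separation $\lambda$ translates into a robust exponential concentration with the correct $\lambda^4$ dependence, rather than a weaker $\lambda^2$, which presumably accounts for both the separation in the test statistic and the variance of the estimator.
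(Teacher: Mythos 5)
Your proposal follows essentially the same route as the paper's proof: bound the ``Identify'' stage by $M-1$ iterations times the $\cO(Mn)$ expected episodes per call of the sampling routine (via Markov's inequality under reachability), guarantee via a union bound and a concentration argument on the likelihood-ratio test that the true task survives with probability $1-1/H$, and absorb the bad event into an $\cO(T)$ term. The paper delegates the likelihood-ratio correctness (including the $\lambda^4$ and $\log^2(SMH/\lambda)$ factors you correctly anticipate) to a lemma imported from \citet{chen2021understanding} rather than reproving it, but your sketch of that step matches its content.
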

The latter result is promising as it provides a fast rate with only logarithmic dependencies on $H$, but it also scales super-linearly with the size of the set $\M$, which is less than ideal for larger task sets. A natural question that arises is whether this is the best we can achieve under the considered separation condition. 
The latter is arguably a very important question as the separation condition is the minimal structural assumption that makes test-time regret minimization ``interesting'', statistically separating the problem from RL. 
In the next section, we provide an answer through a lower bound specifically designed for this setting.

\section{A Lower Bound for Test-Time Regret Minimization under Separation}
\label{sec:lower_bound}

In this section, we analyze the statistical barrier for test-time regret minimization under separation (Assumption~\ref{ass:separation_mdp}) by providing a novel lower bound.
Formally,
\begin{restatable}[Lower bound]{theorem}{mdpSeparationLowerBound}
    Let $\M$ be a set of MDPs for which Assumptions~\ref{ass:separation_mdp},~\ref{ass:reachability} hold. Let $T > M$ and $ M - 1 \leq H \leq C$ for some constant $C < \infty$.
    For any $\M_i \in \M$, algorithm $\mathbb{A}$, and confidence $\delta \in (0, 1)$, we have
    \begin{equation*}
        R_H (\M_i, \mathbb{A}) = \Omega \left( \frac{T M \log (H)}{\lambda} \log \left( \frac{1}{\delta} \right) \right)
    \end{equation*}
    with probability at least $1 - \delta$.
    \label{thr:mdp_lower_bound}
\end{restatable}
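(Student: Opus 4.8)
The goal is to prove a regret lower bound $\Omega\left(\frac{TM\log(H)}{\lambda}\log(1/\delta)\right)$ for test-time regret minimization under separation. My plan is to reduce the problem to a Best Policy Identification (BPI) sample complexity lower bound. The key intuition, announced in the introduction, is that any low-regret algorithm must essentially identify the test task, and identifying the task among $M$ candidates under $\lambda$-separation requires a certain number of informative samples. Since each such sample costs $\Omega(T)$ in regret-relevant terms (one must traverse the horizon to reach and test the revealing state-action pair), and $\Omega(M)$ tasks must be distinguished, the bound should emerge.

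**The hard-instance construction.**

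First I would construct a family of hard MDP instances satisfying Assumptions~\ref{ass:separation_mdp} and~\ref{ass:reachability}. The natural design is a ``chain'' or ``path'' of length $\Theta(T)$ where the revealing state-action pair for distinguishing $\M_i$ from $\M_j$ sits near the end of the horizon, so that any test of that pair necessarily burns $\Omega(T)$ steps of an episode. I would make the transitions at the revealing pair differ by exactly $\lambda$ in $\ell_1$-distance between the candidate tasks (tight separation), so the per-sample information content is controlled by $\lambda$. Crucially, I would arrange the reward structure so that committing to the wrong task's optimal policy incurs $\Omega(1)$ regret per episode: the optimal policies for distinct tasks should be mutually suboptimal, giving a constant value gap. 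The condition $T > M$ ensures there is enough room to embed $M$ distinct revealing pairs along the chain, and the condition $M-1 \le H \le C$ keeps $H$ in a regime where the $\log(H)$ factor is the relevant one rather than being dominated by a $\sqrt{H}$ term.

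**Reduction to BPI and the information-theoretic core.**

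The central step, which I expect to be the main obstacle, is establishing a \emph{tailored} BPI sample complexity lower bound (the one the excerpt flags as being of independent interest, in Appendix~\ref{apx:best_policy_identification}). I would argue that any algorithm $\mathbb{A}$ achieving small regret with probability $1-\delta$ can be converted into a $\delta$-correct BPI algorithm: if regret is $o\left(\frac{TM\log(H)}{\lambda}\log(1/\delta)\right)$, then the committed policy must be optimal with high probability, which amounts to correctly identifying the test task. Then I would apply a change-of-measure / Fano-type argument over the $M$ candidate tasks. The standard transportation lemma gives that to distinguish $\M_i$ from $\M_j$ with confidence $\delta$, one needs expected visits $n_{ij}$ to the revealing pair satisfying $n_{ij}\cdot \KL_{p_i\|p_j} \gtrsim \log(1/\delta)$, and since the separation is $\lambda$, the per-visit KL is $\cO(\lambda^2)$ or $\cO(\lambda)$ depending on the construction — I would design the instance so the relevant divergence scales like $\lambda$, yielding $n_{ij} = \Omega(\log(1/\delta)/\lambda)$. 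Summing the visit requirements across the $\Omega(M)$ pairwise discriminations that any correct identifier must perform, and multiplying by the $\Omega(T)$ cost to reach each revealing pair, produces the $\frac{TM}{\lambda}\log(1/\delta)$ factor. The remaining $\log(H)$ factor I would extract by a doubling/epoch argument: within the $H$ episodes, the worst-case task forces the algorithm to keep discriminating across $\Theta(\log H)$ scales, or alternatively by choosing the adversarial task adaptively so that the identification cost recurs logarithmically in the horizon.

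**Assembling the bound.**

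Finally I would combine the pieces: the high-probability regret is at least (number of episodes spent before correct identification) $\times$ (per-episode regret gap), where the number of pre-identification episodes is lower-bounded via the BPI sample complexity of $\Omega\left(\frac{M\log(H)}{\lambda}\log(1/\delta)\right)$ episode-equivalents, each carrying $\Omega(T)$ cost through the chain structure. A subtlety I would need to handle carefully is the adaptivity of $\mathbb{A}$: since the algorithm may switch policies each episode, I must phrase the change-of-measure argument in terms of the expected log-likelihood ratio accumulated over the full history $\Hist_h$, using the chain rule for KL divergence over trajectories, rather than per-episode. The interplay between the $\log(1/\delta)$ high-probability requirement and the $\log(H)$ horizon factor — ensuring they both survive into the final bound without collapsing into one another — is where I anticipate the technical delicacy to lie.
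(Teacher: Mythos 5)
Your overall architecture matches the paper's: a hard instance that ties regret minimization to best policy identification, a conversion of any low-regret algorithm into a $\delta$-correct identifier, a tailored BPI sample-complexity lower bound with MDP flow constraints (which is exactly how the paper extracts the factor $T$: the allocation on any revealing pair is at most $1/T$ because it can be visited once per episode), and a final bound of the form (identification cost) $\times$ (per-episode gap). However, two of the mechanisms you propose for producing the specific factors in the bound would not work. First, the $\log(H)$ factor: you suggest a doubling/epoch argument in which discrimination ``recurs logarithmically in the horizon,'' but the test task is fixed for all $H$ episodes, so identification is a one-time cost and nothing recurs. The paper instead obtains $\log(H)$ by construction: the states used for identification carry a sub-optimality gap $\Delta_2 = \log(H)/\sqrt{H}$, while the near-optimal ``exploitation'' states carry a smaller gap $\Delta_1 = 1/\sqrt{H}$, so that every episode spent identifying pays $\Delta_2$ and the $\log(H)$ is inherited directly from the gap. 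Second, and relatedly, your choice of a \emph{constant} value gap between the optimal policies of distinct tasks undermines the other half of the argument: the paper needs $\Delta_1 = 1/\sqrt{H}$ precisely so that, on the event that identification fails, regret minimization over the $M$ candidate optimal policies reduces to a hard $M$-armed bandit with gaps $1/\sqrt{H}$ and forces $\Omega(\sqrt{H})$ regret (via Bretagnolle--Huber). With constant gaps the algorithm could cheaply discriminate tasks through the reward signal of candidate optimal policies, bypassing the $\lambda$-separated channel your information-theoretic argument relies on.

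A further gap is the $1/\lambda$ dependence. You propose to ``design the instance so the relevant divergence scales like $\lambda$,'' but for a sample-complexity \emph{lower} bound you need the KL between alternatives to be as \emph{small} as possible subject to $\|p_i - p_j\|_1 \geq \lambda$, and the best achievable there is $\Theta(\lambda^2)$; you cannot make it $\Theta(\lambda)$. The paper accepts the $1/\lambda^2$ from the change of measure and recovers one power of $\lambda$ at the very end using the structural constraint that the transition probabilities $1 - \Delta_2 - \lambda/2$ must lie in $[0,1]$, which forces $\lambda \lesssim (\sqrt{C} - \log C)/\sqrt{H}$ and lets the residual $1/\sqrt{H}$ from $\Delta_2$ absorb one factor of $1/\lambda$ --- this is also the real reason the hypothesis $H \leq C$ appears in the statement, something your plan does not account for. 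These are not cosmetic details: without them you would prove a bound with the wrong $\lambda$ exponent and without the $\log(H)$ factor.
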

Interesting observations come through the lenses of the result above. The lower bound shows that the regret rate of Algorithm~\ref{alg:mdp_separation}~\citep{chen2021understanding} matches the optimal dependencies in $H, T$ factors, while it nearly matches the dependencies with $\lambda$ and the size of the set of tasks $M$. 
The factor of $1 / \lambda$ is not surprising, tying the complexity of the problem to how hard it is to distinguish one task from another.
The result has a fairly negative flavor on the dependency with $M$ instead. It demonstrates that the regret of any algorithm achieving fast rate $\log (H)$ has to scale at least linearly with $M$ under separation, essentially implying that those algorithms are unfit for large sets of tasks. 

Unfortunately, the latter settings in which the size of $\M$ may grow exponentially with the size of the tasks, or even be infinite (e.g., when tasks are continuous), are extremely relevant in practice, and it is arguably where the promises of meta RL for improved efficiency are the most enticing. 

\begin{figure*}[t]
    \centering
    \includegraphics[width=0.78\textwidth]{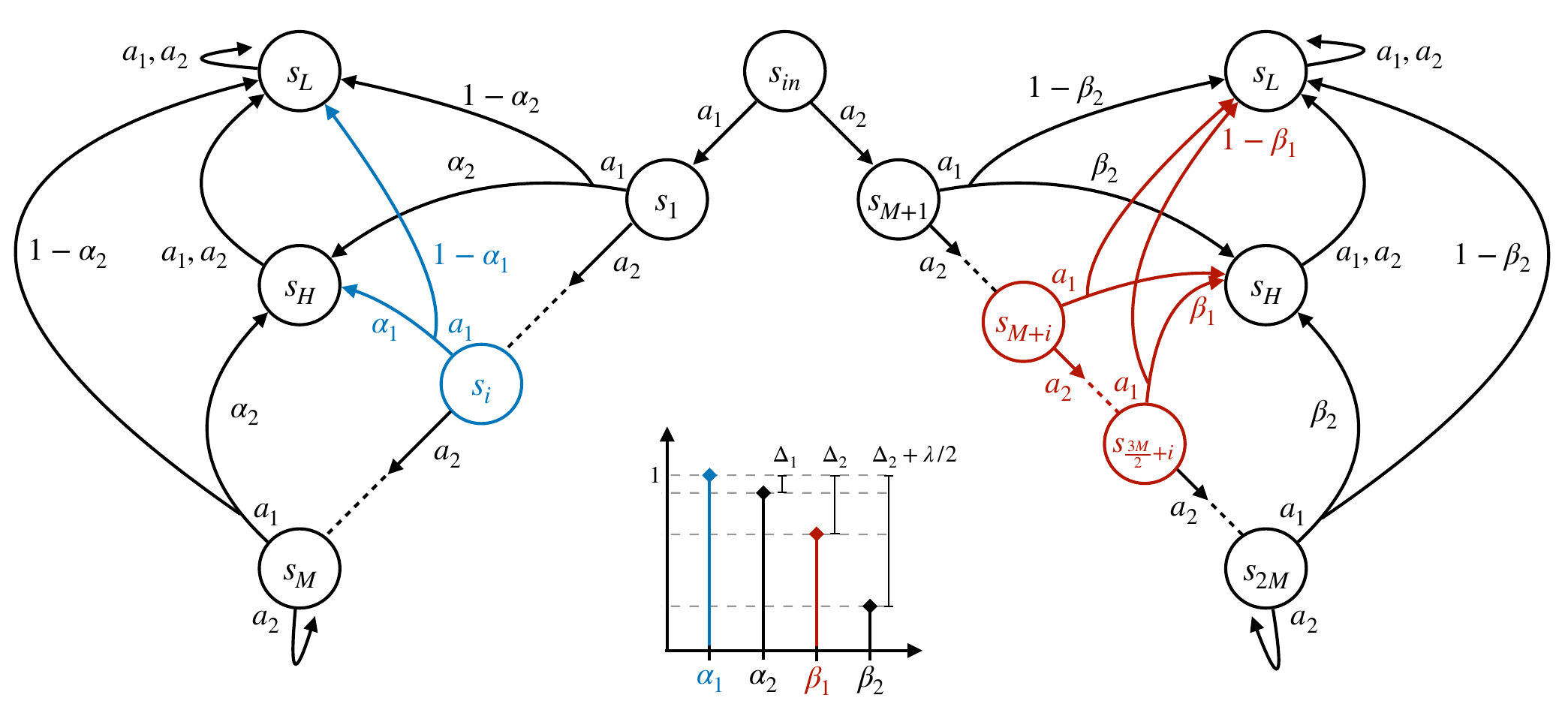}
    \vspace{-0.4cm}
    \caption{Visualization of the MDP $\M_i$ in the lower bound instance. Note that the role of state $s_i$ and $s_{M + i}, \ldots, s_{\frac{3M}{2} + i}$ change for every MDP in $\M$. Also note that $s_L, s_H$ on the left and right refer to the same pair of states, which are reported twice only to ease inspection. The bottom chart report the specification of the transition probabilities. The values of $\Delta_1, \Delta_2$ are designed to be small enough to make the optimal policy hard to identify playing only slightly sub-optimal policies and large enough to penalize easy identification, respectively.}
    \label{fig:lb_instance_mdp}
\end{figure*}

An open question remains on whether there exist relevant meta RL settings in which the structure of the problem can be further exploited to achieve fast rates $\log(H)$ while avoiding the dependence with $M$. In the next section, we discuss structural assumptions that go beyond the separation condition and allow to obtain the most efficient algorithms for test-time regret minimization.

Before that, we briefly sketch the main components of the proof of Theorem~\ref{thr:mdp_lower_bound}, which make use of original techniques and constructions that may be of independent interest. We defer thorough derivations to Appendix~\ref{sec:apx_proofs_lower_bound}. Note that a gentler introduction to the lower bound in the meta bandit setting is reported in Appendix~\ref{apx:bayesian_bandits}.

\subsection{Proof Sketch}

The key to our proof is to design a hard instance that links test-time regret minimization to the problem of best policy identification~\cite{fiechter1994efficient}, and then invokes a lower bound to the sample complexity of the latter to derive the result. While instance-dependent lower bounds of this kind exist in the literature~\citep[e.g.,][]{al2021navigating,wagenmaker2022beyond,al2023towards}, as a preliminary step we derive a result that is specifically tailored to our setting. Here we do not report derivations, which are non-trivial adaptations from a lower bound for BPI in infinite-horizon MDPs~\citep[see][Proposition 2]{al2021navigating}. We leave a detailed description to Appendix~\ref{apx:best_policy_identification}.

In Figure~\ref{fig:lb_instance_mdp}, we report a visualization of the instance constructed to derive the lower bound, which consists of $M$ MDPs having $2M + 3$ states and $2$ actions each, a high-rewarding state $s_H$, and an absorbing state $s_L$ with zero reward. It is easy to see that the optimal policy for $\M_i$ goes to the state $s_i$ and then takes action $a_1$, which gives the highest probability to visit $s_H$. However, taking action $a_1$ in every other state $s_j \in (s_1, \ldots, s_M) \setminus s_i$ is only slightly sub-optimal, meaning that regret minimization is hard. Instead, it is easier to identify the test task $\M_i$ first, by playing action $a_1$ in the states $s_j \in (s_{M + 1}, \ldots, s_{2M})$, for which at least one $(s_j, a_1)$ pair is guaranteed to reveal the test task against any other, and then to play the optimal policy thereafter.  
To formalize the latter intuition, we note that identifying the test task is equivalent to a BPI problem for this instance and we center the proof around the event
\begin{equation*}
    \mathcal{E} := \big\{\text{``best policy is identified within $H$ episodes''}\big\}.
\end{equation*}
Then, we show that the regret is lower bounded by $\Omega (\sqrt{H})$ when $\mathcal{E}$ does not hold, which implies that solving the BPI problem is necessary to obtain the best rate.\footnote{Note that this does not prescribe how the algorithm collect samples. It tells that, whatever the sampling strategy, the BPI problem has to be solved within $H$ episodes.} At this point, we invoke the BPI lower bound in Lemma~\ref{lemma:best_policy}, which simultaneously guarantees that $\mathcal{E}$ holds with probability at least $1 - \delta$ and that the regret is lower bounded by
\begin{equation*}
    R_H (\M_i, \mathbb{A}) \geq \EV [\tau] \Delta_2
\end{equation*}
where $\EV [\tau] \geq TM / \lambda^2 \log (1 / 2.4 \delta)$ is the sample complexity of the BPI problem on the constructed instance and $\Delta_2 := V^*_i - V (\pi) = \log (H) / \sqrt{H}$ is the value gap of playing a sub-optimal policy to identify the task. Theorem~\ref{thr:mdp_lower_bound} is obtained through additional algebraic manipulations.

\section{Strong Identifiability: Beyond Separation for Faster Rates}
\label{sec:beyond_separation}

In the previous section, we have settled the statistical complexity of test-time regret minimization in meta RL with a finite set of tasks under a common separation condition (Assumption~\ref{ass:separation_mdp} has been previously considered in~\citealt{chen2021understanding} as is, while a stronger version of the condition has been used in~\citealt{kwon2021rl}). 

Especially, we provided a lower bound to the test-time regret of order $\Omega (T M \log (H) / \lambda)$ and we showed that an ``Identify-then-Commit'' strategy~\citep{chen2021understanding} leads to a nearly matching upper bound $\cO (T M^2 \log(M H) / \lambda^4)$. Most importantly, our analysis shows that a linear dependence with the size $M$ of the set of tasks cannot be avoided under the considered separation condition, at least if we aim for a fast rate in $H$. In this section, we present stronger yet reasonable assumptions that allow for faster rates of the test-time regret. 
These assumptions implicitly discriminate the problem instances, which we call \emph{strongly identifiable}, that are truly worth framing into a meta RL paradigm, as the latter provides undeniable statistical benefits over RL.
\begin{definition}[Strong identifiability]
    Let $\M$ be a set of MDPs for which Assumption~\ref{alg:mdp_separation} hold. $\M$ is \emph{strongly identifiable} if, for all $\M_i \in \M$, we can identify $\M_i$ with high probability by only playing policies from a restricted policy class $\Pi^{id}$ where $|\Pi^{id}| \leq \gamma M$ for some factor $\gamma \in (0, 1)$.
\end{definition}
Next, we provide three instances of strong identifiability, with specialized algorithms and corresponding analyses, while investigating a broader range of strongly identifiable problems is an interesting directions for future works.


\subsection{Meta RL with Clustering}
\label{sec:beyond_separation_clustering}

Let us assume that the tasks in $\M$ can be grouped in coherent \emph{clusters} admitting a peculiar and testable property that allows to efficiently discriminate one cluster from the others. Formally,
\begin{assumption}[$KN\lambda$-clustering]
\label{ass:mdp_clustering}
    Let $\C = (\C_k)_{k \in [K]}$ be a partition of $\M$ such that $|\C_k| \leq N$ for all $k \in [K]$ and $N > K$. For all $\C_k \in \C$, there exists $(s,a) \in \S \times \A$ such that $\min_{\M_i \in \C_k} \min_{\M_j \in \M \setminus \C_k} \| (p_i - p_j)(\cdot | s, a) \|_1 \geq \lambda.$
\end{assumption}
Further, we consider a slightly stronger reachability condition within a cluster, which essentially assures that a state can be reached throughout the cluster with the same policy.
\begin{assumption}[Cluster reachability]
\label{ass:mdp_clustering_reachability}
    Let $C_k \in \C$ be a cluster of reachable MDPs. For all $\M_i, \M_j \in \C_k$ and $s \in \S$, it holds $\EV [X (s | \M_j, \pi_i)] \leq T / 2$ for $\pi_i \in \argmin_{\pi \in \Pi} \EV [X (s | \M_i, \pi)]$.
\end{assumption}

The combination of latter assumptions allows to take a different algorithmic approach w.r.t.~the standard separation condition (Assumption~\ref{ass:separation_mdp}). Essentially, we can split the ``Identify'' stage of Algorithm~\ref{alg:mdp_separation} into two separate phases: First, we identify the cluster to which the test task belongs with high probability, then, we call Algorithm~\ref{alg:mdp_separation} to identify the test task within the cluster selected in the previous step. We call this procedure ``Double-Identify-then-Commit'' to underline the two-phase structure of the ``Identify'' stage. We report the pseudocode of the procedure in Algorithm~\ref{alg:mdp_clustering}.

\begin{algorithm}[t]
    \caption{Double-Identify-then-Commit}
    \label{alg:mdp_clustering}
\begin{small}
    \begin{algorithmic}[1]
        \STATE \textbf{input} set of MDPs $\M$, clusters $\C$, visitation count $n_{\C}$
        \STATE Initialize $\D = (\M_k)_{k \in [K]}$ with $\M_k$ from $\C_k$ randomly
        \WHILE{$|\C| > 1$}
            \STATE Draw $\C_k$ from $\C$ randomly and let $(\M_1, \M_2, \bar s, \bar a) \in \argmin\limits_{\M_i \in \C_k, \M_j \in \M \setminus \C_k} \max\limits_{(s, a) \in \S \times \A} \| (p_i - p_j) (\cdot | s, a) \|_1$
            \STATE Call  Algorithm~\ref{alg:mdp_sampling} with $\D, (\bar s, \bar a), n_\C$ to collect $\X$
            \IF{$\exists s' \in \X: p_1 (s' | \bar s, \bar a) = 0$ or $\prod_{s' \in \X} \frac{p_1 (s' | \bar s, \bar a)}{p_2 (s' | \bar s, \bar a)} \geq 1$}
                \STATE Update $\C = \C_k$
            \ELSE
                \STATE Update $\C = \C \setminus \C_k$
            \ENDIF
        \ENDWHILE
        \STATE Take $\hat \C \in \C$ and run Algorithm~\ref{alg:mdp_separation} with $\D = \hat \C, n = \frac{c \log^2 (S N H / \lambda) \log (NH)}{\lambda^4}$ for the remaining episodes
    \end{algorithmic}
\end{small}
\end{algorithm}

Since both the number of clusters $K$ and the size of each cluster are smaller than the size $M$ of the set of MDPs $\M$ under Assumption~\ref{ass:mdp_clustering}, the set of MDPs is strongly identifiable. Thus, we can expect the ``Identify'' stage of Algorithm~\ref{alg:mdp_clustering} to be more efficient than the same stage of Algorithm~\ref{alg:mdp_separation} taking $\M$ as input, which is confirmed by the result below.
\begin{restatable}[]{theorem}{mdpClusteringUpperBound}
    Let $\M$ be a set of MDPs for which Assumption~\ref{ass:reachability},~\ref{ass:mdp_clustering},~\ref{ass:mdp_clustering_reachability} hold. For any $\M_i \in \M$, we have
    \begin{equation*}
        R_H (\M_i, \mathbb{A}) = \cO \left( \frac{T (K^2 + N^2) \log (N H) \log^2 (\frac{S N H}{\lambda})}{\lambda^4} \right)
    \end{equation*}
    where $\mathbb{A}$ is Algorithm~\ref{alg:mdp_clustering} with inputs $\M, \C$, and $n_\C = \frac{c \log^2 (S K H / \lambda) \log (KH)}{\lambda^4}$ for a sufficiently large constant $c$.
    \label{thr:mdp_upper_bound_clustering}
\end{restatable}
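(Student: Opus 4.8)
The plan is to decompose the execution of Algorithm~\ref{alg:mdp_clustering} into three stages --- a cluster-identification stage (lines 3--11), a within-cluster identification stage, and a commit stage (both triggered at line 12) --- and to bound the regret contributed by each stage separately, mirroring the analysis behind Theorem~\ref{thr:mdp_upper_bound_separation} but applied at two granularities. Since rewards lie in $[0,1]$ and each episode lasts $T$ steps, every episode loses at most $T$ in value, so the regret of a stage is at most $T$ times the number of episodes it consumes, plus $TH$ times the probability that it returns a wrong answer. The whole argument thus reduces to (i) counting episodes per stage and (ii) controlling the stage failure probabilities through the choice of the visitation counts $n_\C$ and $n$. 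I would carry this out for an arbitrary fixed test task $\M_i$, so that the resulting bound is uniform over $\M_i \in \M$.

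For the episode count I would first show that each call to the sampling routine (Algorithm~\ref{alg:mdp_sampling}) during the cluster stage collects its $n_\C$ samples in $\cO(K n_\C)$ episodes in expectation. The key point is reachability of the tested state $\bar s$ in the \emph{true} test task $\M_i$: although $\D$ holds only one representative per cluster, the representative of the true cluster stays in $\D$ throughout this stage, and Assumption~\ref{ass:mdp_clustering_reachability} guarantees that its reaching policy reaches $\bar s$ in $\M_i$ within expected time $T/2$, hence with probability at least $1/2$ per episode by Markov's inequality; so each round of $2K$ episodes yields a fresh sample with constant probability. As the loop at line 3 runs at most $K-1$ iterations, the cluster stage consumes $\cO(K^2 n_\C)$ episodes and contributes $\cO(T K^2 n_\C)$ to the regret. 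For the within-cluster stage I would invoke Theorem~\ref{thr:mdp_upper_bound_separation} as a black box on the identified cluster $\hat\C$: a set of at most $N$ tasks for which Assumption~\ref{ass:separation_mdp} holds (it is the standing premise of the strong-identifiability setting and, being a property of all of $\M$, restricts to $\hat\C$) and for which Assumption~\ref{ass:reachability} suffices (the test task now lies in $\D = \hat\C$, so its own reaching policy is played). With the prescribed $n = c\log^2(SNH/\lambda)\log(NH)/\lambda^4$ this yields regret $\cO(T N^2 n) = \cO(T N^2 \log(NH)\log^2(SNH/\lambda)/\lambda^4)$, already accounting internally for its own identification failures.

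The correctness of the two identification stages is where the separation parameter enters, and I expect the cluster stage to be the main obstacle. For the within-cluster stage, correctness is inherited from Theorem~\ref{thr:mdp_upper_bound_separation}. For the cluster stage I would establish, via a concentration argument on the log-likelihood ratio $\sum_{s' \in \X} \log(p_1(s')/p_2(s'))$ analogous to the one-vs-one test behind Theorem~\ref{thr:mdp_upper_bound_separation}, that with $n_\C = \Theta(\log^2(SKH/\lambda)\log(KH)/\lambda^4)$ samples drawn from $p_i(\cdot \mid \bar s, \bar a)$ each cluster test errs with probability at most $\poly(1/(KH))$. The delicate part, absent from the one-vs-one analysis, is that the test compares representatives $\M_1 \in \C_k$ and $\M_2 \in \M \setminus \C_k$ while the observed transitions come from a possibly \emph{different} task $\M_i$; making the test decide the cluster membership of $\M_i$ correctly requires the uniform cross-cluster margin of Assumption~\ref{ass:mdp_clustering} (every member of $\C_k$ is $\lambda$-separated from every non-member at the chosen $(\bar s, \bar a)$) together with cluster coherence, so that the likelihood ratio is pushed to the correct side of its threshold in both the in-cluster and out-of-cluster cases. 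A union bound over the at most $K-1$ tests then gives a total cluster-stage failure probability of $\cO(1/H)$, so the wrong-commit regret $\P[\text{cluster fails}] \cdot TH$ is only $\cO(T)$.

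Finally I would collect the three contributions, $\cO(T K^2 n_\C) + \cO(T N^2 n) + \cO(T)$, substitute the values of $n_\C$ and $n$, and simplify using $N > K$ (so that $\log(KH) \le \log(NH)$ and $\log^2(SKH/\lambda) \le \log^2(SNH/\lambda)$) to absorb both identification terms under the larger logarithmic factors. This gives $R_H(\M_i, \mathbb{A}) = \cO(T(K^2 + N^2)\log(NH)\log^2(SNH/\lambda)/\lambda^4)$, as claimed.
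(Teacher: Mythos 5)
Your proposal is correct and follows essentially the same route as the paper's proof: a two-stage episode count ($\cO(K^2 n_\C)$ for the cluster stage via the sampling routine and cluster reachability, $\cO(N^2 n)$ for the within-cluster stage by rerunning the Theorem~\ref{thr:mdp_upper_bound_separation} argument on $\hat\C$), union bounds over the $K-1$ and $N-1$ elimination tests to get per-stage failure probability $\cO(1/H)$, and a final regret decomposition absorbing the smaller logarithmic factors using $N>K$. Your explicit handling of the point that the cluster test compares two representatives while samples come from the possibly distinct task $\M_i$ (resolved by the uniform cross-cluster margin of Assumption~\ref{ass:mdp_clustering}) is slightly more careful than the paper, which delegates this step to its elimination lemma without comment.
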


While the latter proves a significant speed-up of the ``Identify'' stage, which can be relevant in practical settings, the regret does not essentially escape the quadratic dependency on the number of tasks $M$. Indeed, if we look into the rate we get $K^2 + N^2 \approx K^2 + M^2 / K^2$ as opposed to $M^2$ with the separation condition (Assumption~\ref{ass:separation_mdp}) alone.
At this point, it is interesting to ask which kind of structure can lead to a regret rate that is truly sublinear in the number of tasks $M$. In the next section, we achieve that by considering a tree structure on the set of MDPs $\M$.

\subsection{Meta RL with a Tree Structure}
\label{sec:beyond_separation_tree_structure}

Let us suppose that for any set of candidate test MDPs we extract from the original set there always exist a state-action pair from which we can collect information to \emph{split} the candidates in two subsets. For instance, we can think of a set of indoor physical domains with peculiar configurations, e.g., some with a window on the right-hand side and some without. If those states not only exist, but they are also easy to reach, we can then sequentially split the original set of MDPs to increasingly smaller subsets, until a single candidate remains. To formalize this intuition, we first define what we mean for a state-action to be \emph{easy-to-reach}.
\begin{assumption}[Strong reachability]
\label{ass:strong_reachability}
	Let $\D \subseteq \M$ a set of MDPs. We say that $(s, a) \in \S \times \A$ is \emph{strongly reachable} in $\D$ if, for all $\M_i, \M_j \in \D$, it holds  $\EV [ X(s, a | \M_j, \pi_i)] \leq T / 2$ where $\pi_i \in \argmin_{\pi \in \Pi} \EV[X(s, a | \M_i, \pi)]$.
\end{assumption}

\begin{algorithm}[t]
    \caption{Tree-Identify-then-Commit}
    \label{alg:mdp_tree}
\begin{small}
    \begin{algorithmic}[1]
        \STATE \textbf{input} set of MDPs $\M$ and visitation count $n$
        \STATE Initialize $\D = \M$
        \WHILE{$|\D| > 1$}
            \STATE Compute $\M_1, \M_2, \D^+, \D^-, (\bar s, \bar a)$ by solving
            \vspace{-5pt}
            \begin{align*}
                &\min_{\M_i \in \D^+, \M_j \in \D^-} \max_{(s, a) \in \S \times \A} \| (p_i - p_j) (\cdot | s, a) \|_1 \\
                &\text{ s. t. } \D^+ \cup \D^- = \D, (s, a) \text{ is strongly reachable}
            \end{align*}
            \STATE Initialize $N_{\bar s \bar a} = 0$ and $\X = \emptyset$
            \WHILE{$N_{\bar s \bar a} < n$}
                \STATE Run the policy $\pi \in \argmin_{\pi \in \Pi} \EV [X (\bar s | \M_1, \pi)]$ for two episodes
                \IF{$\bar s$ is reached}
                    \STATE Take action $\bar a$ and collect the next state $s'$
                    \STATE Update $N_{\bar s \bar a} = N_{\bar s \bar a} + 1, \X = \X \bigcup (s')$
                \ENDIF
            \ENDWHILE
            \IF{$\exists s' \in \X: p_1 (s' | \bar s, \bar a) = 0$ or $\prod_{s' \in \X} \frac{p_1 (s' | \bar s, \bar a)}{p_2 (s' | \bar s, \bar a)} \geq 1$}
                \STATE Update $\D = \D^+$
            \ELSE
                \STATE Update $\D = \D^-$
            \ENDIF
        \ENDWHILE
        \STATE Take $\hat \M \in \D$ and run $\hat \pi \in \argmax_{\pi \in \Pi} V_{\hat \M} (\pi)$ for the remaining episodes
    \end{algorithmic}
\end{small}
\end{algorithm}

Then, we can formally define our structural assumption.
\begin{assumption}[$\beta$-tree]
\label{ass:mdp_tree}
    Let $\M$ be a set of MDPs and let $\beta \in (\nicefrac{1}{2},1)$. For all $\D \subseteq \M$, there exist a strongly reachable $(s,a) \in \S \times \A$ and a partition $(\D^+, \D^-)$ of $\D$ such that $\min_{\M_i \in \D^+} \min_{\M_j \in \D^-} \| (p_i - p_j) (\cdot |s,a) \|_1 \geq \lambda$ where $\max (|\D^+|, |\D^-|) / |\D| \leq \beta$.
\end{assumption}

The latter induces a tree structure on the set of MDPs, such that we can render the identification problem as traversing a (binary) decision tree. The routine works as follows, we start with the initial set of MDPs $\M$, then we look for a state-action pair inducing a large enough split, which we visit several times to understand whether the test task belongs to $\D^+$ or $\D^-$. We can iterate these steps again on the resulting set, i.e., $\D^+$ or $\D^-$, so that the set of candidate MDPs iteratively shrinks towards one single MDP, which is our candidate test task for the remaining episodes. The pseudocode for the procedure is reported in Algorithm~\ref{alg:mdp_tree}.

Clearly, the cost of the ``Identify'' stage of Algorithm~\ref{alg:mdp_tree}, i.e., the number of times the while loop between lines 2-12 has to be executed, is tied to the depth of the tree structure, which is $\log_{1 / \beta} (M)$ in the worst case. With this consideration, we can derive the following regret upper bound.
\begin{restatable}[]{theorem}{mdpTreeUpperBound}
    Let $\M$ be a set of MDPs for which Assumption~\ref{ass:strong_reachability},~\ref{ass:mdp_tree} hold and let $d = \log_{1 / \beta} (M)$. For any $\M_i \in \M$, we have
    \begin{equation*}
        R_H (\M_i, \mathbb{A}) = \cO \left( \frac{T d \log (d H) \log^2 (S d H / \lambda) }{\lambda^4} \right)
    \end{equation*}
    where $\mathbb{A}$ is Algorithm~\ref{alg:mdp_tree} with inputs $\M$ and $n = \frac{c \log^2 (S d H / \lambda) \log (d H)}{\lambda^4}$ for a sufficiently large constant $c$.
    \label{thr:mdp_upper_bound_tree}
\end{restatable}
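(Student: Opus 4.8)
The plan is to reuse the ``Identify-then-Commit'' template behind Theorem~\ref{thr:mdp_upper_bound_separation}, but to account for a tree traversal rather than a sequence of one-vs-one eliminations. I would split the test-time regret into the regret $R_{\mathrm{id}}$ accumulated while the outer while loop of Algorithm~\ref{alg:mdp_tree} is running and the regret $R_{\mathrm{com}}$ accumulated once a single candidate $\hat\M$ is committed to. Since per-step rewards lie in $[0,1]$, every episode contributes at most $T$ to the regret, so it suffices to bound (i) the number of episodes spent in the Identify stage and (ii) the probability that $\hat\M \neq \M_i$, which is the only way the Commit stage can incur any regret.

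For (i), the number of while-loop iterations is governed by the depth of the tree: Assumption~\ref{ass:mdp_tree} guarantees that at every node a strongly reachable splitting pair exists with $\max(|\D^+|,|\D^-|)/|\D|\le \beta$, so each update shrinks the candidate set by a factor $\beta$ and, starting from $|\D|=M$, the loop terminates after at most $d=\log_{1/\beta}(M)$ iterations with $|\D|=1$. Within a single node, the inner sampling loop collects $n$ transitions from $(\bar s,\bar a)$ by repeatedly running the reaching policy $\pi_1$ for $\M_1$. Here strong reachability (Assumption~\ref{ass:strong_reachability}) is essential: it ensures $\EV[X(\bar s,\bar a\mid \M_j,\pi_1)]\le T/2$ for \emph{every} $\M_j$ still in $\D$, hence in particular for the unknown test task, so by Markov's inequality $(\bar s,\bar a)$ is hit within an episode with probability at least $1/2$, and $\cO(n)$ episodes suffice to gather $n$ samples irrespective of which task is actually being faced. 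Summing over the at most $d$ nodes gives $\cO(dn)$ episodes, i.e.\ $R_{\mathrm{id}}=\cO(Tdn)$; substituting $n=\tfrac{c\log^2(SdH/\lambda)\log(dH)}{\lambda^4}$ recovers exactly the stated rate.

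For (ii), I would argue that along the (random) root-to-leaf path the test task is never routed into the wrong subset, except with small probability. Conditioned on $\M_i\in\D$ at a node, $\M_i$ lies on one side of the partition, and at the splitting pair its transition is $\lambda$-separated from the representative of the opposite side by Assumption~\ref{ass:mdp_tree}; the likelihood-ratio test therefore routes $\M_i$ correctly with probability at least $1-\delta_0$, where the per-node failure probability $\delta_0$ is controlled by the same concentration argument as in~\citet{chen2021understanding} (used for Theorem~\ref{thr:mdp_upper_bound_separation}), yielding $\delta_0\le (dH)^{-2}$ for the prescribed $n$ and a large enough constant $c$. A union bound over the $\le d$ nodes on the path shows $\hat\M=\M_i$ with probability at least $1-d(dH)^{-2}$, so the expected Commit regret is at most $d(dH)^{-2}\cdot TH = \cO(T/(dH))$, which is dominated by $R_{\mathrm{id}}$. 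Combining the two terms yields the theorem.

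I expect the main obstacle to be the correctness of the per-node routing test rather than the (now routine) episode-counting. Unlike the one-vs-one test of Algorithm~\ref{alg:mdp_separation}, where the sampled task is always one of the two compared candidates, here the test task $\M_i$ is in general \emph{neither} representative $\M_1$ nor $\M_2$, so one must verify that the $\lambda$-separation of $\M_i$ from the opposite subset (Assumption~\ref{ass:mdp_tree}) actually makes the log-likelihood-ratio statistic concentrate on the correct side. This is where the separation parameter $\lambda$, the truncation handling of vanishing transition probabilities, and the $\polylog(S,d,H,1/\lambda)$ factors in $n$ enter, and it is the step I would treat most carefully, adapting the concentration bound of~\citet{chen2021understanding} to a test whose data-generating distribution coincides with neither hypothesis.
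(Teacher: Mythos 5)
Your proposal follows essentially the same route as the paper's proof: decompose the regret into Identify and Commit stages, bound the Identify stage by $\cO(dn)$ episodes using the $\beta$-shrinkage of $|\D|$ to get depth $d$ and strong reachability plus Markov's inequality to get $\cO(n)$ episodes per node, then union-bound the per-node misrouting probability over the at most $d$ nodes so the Commit regret is dominated. The subtlety you flag at the end is genuine, and worth noting that the paper's own proof does not resolve it either: it asserts that Lemma~\ref{lemma:mdp_elimination} (which assumes the data-generating distribution is one of the two compared hypotheses) holds ``verbatim'' after substituting $d$ for $M$, whereas in the tree setting the samples come from $p_i$ while the test compares $p_1$ against $p_2$, and Assumption~\ref{ass:mdp_tree} only lower-bounds $\|(p_i-p_2)(\cdot|\bar s,\bar a)\|_1$ without controlling $\KL(p_i,p_1)$, so the sign of the expected log-likelihood ratio $\KL(p_i,p_2)-\KL(p_i,p_1)$ is not immediate.
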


\subsection{Meta RL with a few Revealing Policies}
\label{sec:beyond_separation_revealing}


What happens if we assume that we can extract from $\M$ a small set of \emph{revealing} policies that allow to traverse all of the revealing state-action pairs in expectation? We formalize this intuition in the following assumption. Then, we provide an algorithm and corresponding regret analysis that shows the quadratic dependence with $M$ can be escaped.

Before stating the assumption, we define the set of revealing state-action pairs $\overline{\S\A} \subseteq{\S \times \A}$ such that 
    $\forall \M_i, \M_j \in \M, \exists (s,a) \in \overline{\S\A} : \| (p_i - p_j) (\cdot | s, a) \|_1 \geq \lambda$. Then, 
\begin{assumption}[Revealing policy set]
\label{ass:mdp_revealing_policies}
    Let $\M$ be a set of MDPs for which Assumption~\ref{ass:separation_mdp},~\ref{ass:reachability} hold. There exists a set of policies $\Pi_I$ of size $|\Pi_I| \leq I$ such that $\forall \M_i \in \M$ it holds
    $\max_{\pi \in \Pi_I} \min_{(s, a) \in \overline{\S\A}} P (X (s, a | \M_i, \pi) \leq T) \geq 1 / 2.$
\end{assumption}
In Algorithm~\ref{alg:mdp_revealing_policies}, we report the pseudocode of a procedure that exploits Assumption~\ref{ass:mdp_revealing_policies} to increase the efficiency of the data collection for the ``Identify'' stage. First, the policies in $\Pi_I$ are repeatedly deployed with the goal of collecting $n$ samples for each revealing state-action pair in $\overline{\S\A}$ within an ``Explore'' stage (lines 3-11). Then, differently from previous approaches, the ``Identify'' stage (lines 13-21) is carried out offline with the previously collected data, until the set of tasks $\D$ is reduced to one candidate. Finally, in the ``Commit'' stage (line 22), the optimal policy $\hat \pi$ in the identified test task $\hat \M$ is deployed for the remaining episodes.

\begin{algorithm}[t]
    \caption{Explore-Identify-then-Commit}
    \label{alg:mdp_revealing_policies}
\begin{small}
    \begin{algorithmic}[1]
        \STATE \textbf{input} set of MDPs $\M$, set of policies $\Pi_I$, set of state-action pairs $\overline{\S\A}$, visitation count $n$
        \STATE Initialize $N_{\bar s \bar a} = 0$ and $\X_{\bar s \bar a} = \emptyset$ for all $(\bar s, \bar a) \in \overline{\S \A}$
        \WHILE{$\min ( N_{\bar s \bar a}) < n$}
            \FOR{$\pi_i \in \Pi_I$}
                \STATE Run the policy $\pi_i$ for two episodes
                \IF{$\bar s$ such that $(\bar s, \cdot) \in \overline{\S\A}$ is reached}
                    \STATE Take action $\bar a$ and collect the next state $s'$
                    \STATE Update $N_{\bar s \bar a} = N_{\bar s \bar a} + 1$ and $\X_{\bar s \bar a} = \X_{\bar s \bar a} \bigcup (s')$
                \ENDIF
            \ENDFOR
        \ENDWHILE
        \STATE Initialize $\D = \M$
        \WHILE{$|\D| > 1$}
            \STATE Draw $\M_1, \M_2$ from $\D$ randomly
            \STATE Let $(\bar s, \bar a) \in \argmax_{(s, a) \in \overline{\S\A}} \| (p_1 - p_2) (\cdot | s, a) \|_1$
            \IF{$\exists s' \in \X_{\bar s \bar a} : p_2 (s' | \bar s, \bar a) = 0$ or $\hspace{-0.1cm} 
            \raisebox{0.1cm}{$\prod\limits_{s' \in \X_{\bar s \bar a}}$}
            \frac{p_1 (s' | \bar s, \bar a)}{p_2 (s' | \bar s, \bar a)} \geq 1$}
                \STATE Eliminate $\M_2$ from $\D$
            \ELSE
                \STATE Eliminate $\M_1$ from $\D$
            \ENDIF
        \ENDWHILE
        \STATE Take $\hat \M \in \D$ and run $\hat \pi \in \argmax_{\pi \in \Pi} V_{\hat \M} (\pi)$ for the remaining episodes
    \end{algorithmic}
\end{small}
\end{algorithm}

Now, we provide an upper bound to its test-time regret.

\begin{restatable}[]{theorem}{mdpRevealingUpperBound}
    \label{thr:mdp_upper_bound_revealing_policies}
    Let $\M$ be a set of MDPs for which Assumption~\ref{ass:mdp_revealing_policies} holds. For any $\M_i \in \M$, we have
    \begin{equation*}
        R_H (\M_i, \mathbb{A}) = \cO \left( \frac{T I \log (M H) \log^2 (S M H / \lambda)}{\lambda^4} \right)
    \end{equation*}
    where $\mathbb{A}$ is Algorithm~\ref{alg:mdp_revealing_policies} with inputs $\M, \Pi_I, \overline{\S\A}$, $n = \frac{c \log^2 (S M H / \lambda) \log (MH)}{\lambda^4}$ for a sufficiently large constant $c$.
\end{restatable}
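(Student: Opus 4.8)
The plan is to decompose the test-time regret along the three stages of Algorithm~\ref{alg:mdp_revealing_policies}. The key structural feature I would exploit is that the ``Identify'' stage (lines 13--21) runs entirely offline on the data $(\X_{\bar s\bar a})$ gathered during exploration, so it consumes no test episodes. Hence only the ``Explore'' stage and a possible misidentification in the ``Commit'' stage can generate regret, and I would write
\[
    R_H(\M_i, \mathbb{A}) \leq T \cdot \EV[E] + T H \cdot \P(\hat\M \neq \M_i),
\]
where $E$ is the (random) number of episodes spent exploring, the factor $T$ bounds the per-episode regret since $V_i^\pi \in [0,T]$, and the commit term is nonzero only on the event that the identified task $\hat\M$ differs from the test task $\M_i$.

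First I would bound $\EV[E]$, which is the genuinely new part. Reading Assumption~\ref{ass:mdp_revealing_policies} as $\max_{\pi\in\Pi_I}\min_{(s,a)\in\overline{\S\A}}\P(X(s,a\mid\M_i,\pi)\leq T)\geq 1/2$, there is a \emph{single} policy $\bar\pi\in\Pi_I$ that reaches every revealing pair within the horizon with probability at least $1/2$ under the (unknown) test task $\M_i$. Although $\bar\pi$ is unknown, each round of the while loop deploys all $|\Pi_I|\leq I$ policies, hence $\bar\pi$ in particular, so every pair $(s,a)\in\overline{\S\A}$ acquires a fresh sample in a given round with probability at least $1/2$, and these indicators are i.i.d.\ across rounds. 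A multiplicative Chernoff bound together with a union bound over the $|\overline{\S\A}|\leq SA$ pairs then shows that $\cO(n)$ rounds suffice to push $\min_{(\bar s,\bar a)}N_{\bar s\bar a}$ past $n$ with high probability (using $n \gg \log(SA)$), and since each round costs $2I$ episodes this gives $\EV[E]=\cO(nI)$. Substituting $n=\Theta(\lambda^{-4}\log^2(SMH/\lambda)\log(MH))$ already produces the claimed $\cO(TI\lambda^{-4}\log(MH)\log^2(SMH/\lambda))$ term.

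Next I would control $\P(\hat\M\neq\M_i)$ by reusing the identification analysis behind Theorem~\ref{thr:mdp_upper_bound_separation}. The crucial point is that each offline dataset $\X_{\bar s\bar a}$ is composed of $n$ i.i.d.\ draws from $p_i(\cdot\mid\bar s,\bar a)$, exactly as in the online sampling routine of Algorithm~\ref{alg:mdp_sampling}, so the likelihood-ratio test of line 15 inherits the same guarantee: whenever $\M_i$ is one of the drawn pair, the test retains $\M_i$ except with probability $\exp(-\Omega(n\lambda^4/\log^2(SMH/\lambda)))\leq(MH)^{-2}$ for the chosen $n$, since $\|(p_1-p_2)(\cdot\mid\bar s,\bar a)\|_1\geq\lambda$ on $\overline{\S\A}$ and the log-likelihood ratios are controlled up to the $\log^2$ factor (the degenerate case $p_2(s'\mid\bar s,\bar a)=0$ handled separately). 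A union bound over the at most $M-1$ eliminations yields $\P(\hat\M\neq\M_i)\leq M\cdot(MH)^{-2}\leq(MH)^{-1}$, so the commit term is at most $TH\cdot(MH)^{-1}=T/M=\cO(T)$, dominated by the explore term. Combining the two bounds gives the stated rate.

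The hard part will be the ``Explore'' stage argument, not the identification, which is essentially inherited from~\citet{chen2021understanding}. Specifically, the delicate step is justifying that deploying the entire set $\Pi_I$ blindly---without knowing which $\bar\pi$ is effective for the latent $\M_i$---still collects $n$ samples on \emph{all} revealing pairs within $\cO(nI)$ episodes; this rests on the ``single uniformly-good policy per task'' reading of Assumption~\ref{ass:mdp_revealing_policies} and on the concentration coupling the $|\overline{\S\A}|$ counters. The only point requiring care on the identification side is checking that the offline reuse of $\X_{\bar s\bar a}$ preserves the i.i.d.\ structure the likelihood-ratio test relies on.
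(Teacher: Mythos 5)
Your proposal is correct and follows essentially the same route as the paper's proof: decompose the regret into the ``Explore'' episodes (bounded by $T\cdot\cO(In)$ via the reachability guarantee of Assumption~\ref{ass:mdp_revealing_policies}) plus a commit term controlled by the misidentification probability, which is handled exactly as in Theorem~\ref{thr:mdp_upper_bound_separation} via the likelihood-ratio elimination lemma and a union bound over the $M-1$ offline eliminations. If anything, your Chernoff-plus-union-bound coupling of the $|\overline{\S\A}|$ counters is slightly more explicit than the paper's appeal to ``similar considerations'' for bounding $\EV[H_0]\leq 2In$, but it is a refinement of the same argument rather than a different one.
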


The latter result provides a fast rate for the test-time regret when $I < M^2$, i.e., the problem is strongly identifiable.
However, Algorithm~\ref{alg:mdp_revealing_policies} takes a set of revealing policies as input, leaving to the training phase the burden of providing it. Here we discuss briefly how this requirement can be avoided, while we leave as future work a thorough investigation.

\textbf{Sampling from revealing policies.}~~
Instead of pre-computing a set of revealing policies, one may try to replicate the sampling of the ``Explore'' stage (lines 3-11) while interacting with the test task. In Algorithm~\ref{alg:mdp_revealing_policies_sampling} we describe an adaptive sampling procedure for this purpose. The idea is to iteratively compute a policy that maximizes the (expected) visitation of the revealing state-action pairs left uncovered. The latter objective can be encoded into a trajectory reward defined as in line 4. Once such a policy is computed (line 5), it can be deployed in the test task to collect a trajectory (line 6) and to update the set of the remaining state-action pairs to be covered (line 7). The process repeats until of the revealing state-action pairs have been visited (lines 3-9).

To implement Algorithm~\ref{alg:mdp_revealing_policies_sampling}, two technical hurdles have to be overcome. First, the trajectory reward in line 4 is non-standard. A naive solution is to encode in the state space the necessary information to define the reward on a state level or, alternatively, to adapt algorithms to deal with trajectory rewards~\cite{efroni2021reinforcement, chatterji2021theory}. Secondly, the problem in line 5 is akin to a robust MDP, which is known to be intractable in general~\cite{wiesemann2013robust}. To address the latter, a policy gradient method~\cite{kumar2024policy} can be used to get an approximate solution efficiently, although this may slow down convergence of the while loop (lines 3-9) as opposed to deploying (in line 6) the exact solution of the problem.

\begin{algorithm}[t]
\begin{small}
    \caption{Revealing Policies Sampling}
    \label{alg:mdp_revealing_policies_sampling}
    \begin{algorithmic}[1]
        \STATE \textbf{input} set of MDPs $\M$, set of state-action pairs $\overline{\S\A}$
        \STATE Initialize $h \gets 1$ and the sets $\overline{\S\A}_1 = \overline{\S\A}$, $\X_{\bar s \bar a} = \emptyset$
        \WHILE{$\overline{\S\A}_{h}$ is not empty}
            \STATE Define $ \widetilde r (\tau) := \hspace{-0.3cm} \sum\limits_{(s,a) \in \overline{\S\A}_h} \hspace{-0.3cm}  \mathds{1} \big( \exists (s_t, a_t) \in \tau : (s_t, a_t) = (s, a) \big) $
            \STATE Solve
            $ \pi_h \in \argmax_{\pi \in \Pi} \min_{\M_i \in \M} \EV_{\tau \sim \pi, \M_i} [ \widetilde r (\tau) ]$
            \STATE Take trajectory $\tau_h$  with $\pi_h$ and populate $\X_{\bar s \bar a}$
            \STATE Compute
            $\overline{\S\A}_{h + 1} = \overline{\S\A}_h \setminus \big( (s,a) \in \overline{\S\A}_h : (s,a) \in \tau_h \big)$
            \STATE Increment $h \gets h + 1$
        \ENDWHILE
        \STATE \textbf{output} the sets $\X_{\bar s \bar a}$
    \end{algorithmic}
\end{small}
\end{algorithm}

\section{Related Works}
\label{sec:related_works}

While we are not aware of any previous work explicitly addressing test-time regret minimization under perfect training within the meta RL paradigm, slight variations of our problem setting have been considered in different domains.

As we extensively reported in the previous sections, \citet{chen2021understanding} provides theoretical results on the sim-to-real gap in domain randomization that can be transferred to our setting almost verbatim. Their \emph{sim} stage coarsely correspond to our training, while their \emph{real} stage is our test task, for which they study a notion of regret against the optimal policy specific to the task. Differently from our setting, they consider infinite-horizon MDPs and they analyse the regret rate of a Bayes-optimal policy instead of our adaptive algorithms. Notably, they assume to have recovered an exact Bayes-optimal policy from simulations, which is similar in nature to our perfect training assumption. In their setting of interest, they provide regret guarantees of order $\cO(M^2 \log (MH))$ for finite set of tasks under separation, $\cO(M^2 \sqrt{H})$ for finite set of tasks without separation, and $\cO(\sqrt{d_E H})$ for infinite set of tasks with function approximation, where $d_E$ is the eluder dimension of the function class. Finally, they provide a lower bound $\Omega (H)$ for finite sets without separation. We fill the gaps in their analysis providing a lower bound specialized for the separation condition and assumptions beyond separation for faster rates.

The work by~\citet{ye2023power} studies generalization guarantees of pre-training in RL, in terms of Bayesian and frequentist regret, zero-shot generalization or with additional test-time interaction. The latter setting in the frequentist regret formulation is the analogous to our test-time regret minimization, for which they provide a policy-collection elimination algorithm with regret of order $\cO(\sqrt{\mathbb{C} (P) H})$, where $\mathbb{C} (P)$ is a measure of complexity of the task distribution. Although the complexity $\mathbb{C} (P)$ can capture structured finite or infinite set of tasks, it does not allow for escaping the $\sqrt{H}$ regret of standard RL.

\citet{kwon2021rl} address regret minimization in Latent MDPs (LMDPs). In their setting, at every episode a task is drawn from a set of finite but unknown tasks, for which the agent tries to minimize the regret against an optimal policy for the specific task. Essentially, LMDPs formalism can be seen as a variation of our setting in which the test task is not persistent but changes at every episode, and the agent does not have full knowledge of the transition dynamics of the tasks, which have to be estimated from samples. In its full generality, LMDPs are statistically intractable. Analogously to our work, they consider a (stronger) version of the separation condition to achieve a regret rate of order $\cO(\sqrt{MH})$ for their inherently harder setting. Further variations of LMDPs have been studied, including reward-mixing MDPs~\cite{kwon2021reinforcement, kwon2023reward}, analogous of LMDPs with fixed dynamics but changing rewards, LMDPs with side information partially revealing the current task~\cite{kwon2023prospective}, and mixture of MDPs~\cite{kausik2023learning}.

\section{Conclusion}

In this paper, we provided a formal study on the statistical barriers of test-time regret minimization under strong structural assumptions, shedding light on when meta RL can be expected to provide significant benefits over standard RL.

First, we settled the complexity of test-time regret minimization under separation deriving a lower bound specialized for the assumption, for which only upper bounds were known in the literature. Then, to overcome the (super)linear dependency with the size of the set of tasks, we studied a family of structural assumptions beyond separation, i.e., the set of tasks can be grouped in coherent clusters, the test task can be identified following a tree structure, or a small set of revealing policies can be deployed to identify the test task.

Future works may extend our results in various directions. Additional structural assumptions fitting in the broad family of strong identifiability may be investigated, as well as separation conditions at the level of trajectory generation processes rather than single state-action pairs. Understanding the impact of approximate training, i.e., only imperfect estimates of the tasks' transition dynamics are available, on our test-time regret minimization results is important to bring our analysis in a more realistic setting. Whether there exists minimal assumptions that allow for fast rates of order $\log (H)$ for infinite set of tasks is also a question worth investigating. 

Finally, we hope that our theoretical study can bring inspiration to design practical algorithms for improved efficiency of test-time learning in meta RL.

\section*{Acknowledgments}

This research was Funded by the European Union (ERC, Bayes-RL, 101041250). Views and opinions expressed are however those of the author(s) only and do not necessarily reflect those of the European Union or the European Research Council Executive Agency (ERCEA). Neither the European Union nor the granting authority can be held responsible for them.

The authors thank Shie Mannor for insightful discussions on an early draft of this work.


\section*{Impact Statement}
This paper presents work whose goal is to advance the field of Machine Learning. There are many potential societal consequences of our work, none which we feel must be specifically highlighted here.

\bibliography{biblio}

\begin{thebibliography}{51}
\providecommand{\natexlab}[1]{#1}
\providecommand{\url}[1]{\texttt{#1}}
\expandafter\ifx\csname urlstyle\endcsname\relax
  \providecommand{\doi}[1]{doi: #1}\else
  \providecommand{\doi}{doi: \begingroup \urlstyle{rm}\Url}\fi

\bibitem[Agarwal et~al.(2020)Agarwal, Kakade, and Yang]{agarwal2020model}
Agarwal, A., Kakade, S., and Yang, L.~F.
\newblock Model-based reinforcement learning with a generative model is minimax
  optimal.
\newblock In \emph{Conference on Learning Theory}, 2020.

\bibitem[Al~Marjani \& Proutiere(2021)Al~Marjani and Proutiere]{al2021adaptive}
Al~Marjani, A. and Proutiere, A.
\newblock Adaptive sampling for best policy identification in {M}arkov decision
  processes.
\newblock In \emph{International Conference on Machine Learning}, 2021.

\bibitem[Al~Marjani et~al.(2021)Al~Marjani, Garivier, and
  Proutiere]{al2021navigating}
Al~Marjani, A., Garivier, A., and Proutiere, A.
\newblock Navigating to the best policy in {M}arkov decision processes.
\newblock In \emph{Advances in Neural Information Processing Systems}, 2021.

\bibitem[Al-Marjani et~al.(2023)Al-Marjani, Tirinzoni, and
  Kaufmann]{al2023towards}
Al-Marjani, A., Tirinzoni, A., and Kaufmann, E.
\newblock Towards instance-optimality in online pac reinforcement learning.
\newblock \emph{arXiv preprint arXiv:2311.05638}, 2023.

\bibitem[Azar et~al.(2013)Azar, Munos, and Kappen]{gheshlaghi2013minimax}
Azar, M.~G., Munos, R., and Kappen, H.~J.
\newblock Minimax pac bounds on the sample complexity of reinforcement learning
  with a generative model.
\newblock \emph{Machine Learning}, 91:\penalty0 325--349, 2013.

\bibitem[Azar et~al.(2017)Azar, Osband, and Munos]{azar2017minimax}
Azar, M.~G., Osband, I., and Munos, R.
\newblock Minimax regret bounds for reinforcement learning.
\newblock In \emph{International Conference on Machine Learning}, 2017.

\bibitem[Bai et~al.(2019)Bai, Xie, Jiang, and Wang]{bai2019provably}
Bai, Y., Xie, T., Jiang, N., and Wang, Y.-X.
\newblock Provably efficient q-learning with low switching cost.
\newblock In \emph{Advances in Neural Information Processing Systems}, 2019.

\bibitem[Chatterji et~al.(2021)Chatterji, Pacchiano, Bartlett, and
  Jordan]{chatterji2021theory}
Chatterji, N., Pacchiano, A., Bartlett, P., and Jordan, M.
\newblock On the theory of reinforcement learning with once-per-episode
  feedback.
\newblock In \emph{Advances in Neural Information Processing Systems}, 2021.

\bibitem[Chen et~al.(2022)Chen, Hu, Jin, Li, and Wang]{chen2021understanding}
Chen, X., Hu, J., Jin, C., Li, L., and Wang, L.
\newblock Understanding domain randomization for sim-to-real transfer.
\newblock In \emph{International Conference on Learning Representations}, 2022.

\bibitem[Dann \& Brunskill(2015)Dann and Brunskill]{dann2015sample}
Dann, C. and Brunskill, E.
\newblock Sample complexity of episodic fixed-horizon reinforcement learning.
\newblock In \emph{Advances in Neural Information Processing Systems}, 2015.

\bibitem[Dann et~al.(2019)Dann, Li, Wei, and Brunskill]{dann2019policy}
Dann, C., Li, L., Wei, W., and Brunskill, E.
\newblock Policy certificates: Towards accountable reinforcement learning.
\newblock In \emph{International Conference on Machine Learning}, 2019.

\bibitem[Duan et~al.(2016)Duan, Schulman, Chen, Bartlett, Sutskever, and
  Abbeel]{duan2016rl}
Duan, Y., Schulman, J., Chen, X., Bartlett, P.~L., Sutskever, I., and Abbeel,
  P.
\newblock Rl\textsuperscript{2}: Fast reinforcement learning via slow
  reinforcement learning.
\newblock \emph{arXiv preprint arXiv:1611.02779}, 2016.

\bibitem[Efroni et~al.(2021)Efroni, Merlis, and
  Mannor]{efroni2021reinforcement}
Efroni, Y., Merlis, N., and Mannor, S.
\newblock Reinforcement learning with trajectory feedback.
\newblock In \emph{AAAI Conference on Artificial Intelligence}, 2021.

\bibitem[Fawzi et~al.(2022)Fawzi, Balog, Huang, Hubert, Romera-Paredes,
  Barekatain, Novikov, R~Ruiz, Schrittwieser, Swirszcz,
  et~al.]{fawzi2022discovering}
Fawzi, A., Balog, M., Huang, A., Hubert, T., Romera-Paredes, B., Barekatain,
  M., Novikov, A., R~Ruiz, F.~J., Schrittwieser, J., Swirszcz, G., et~al.
\newblock Discovering faster matrix multiplication algorithms with
  reinforcement learning.
\newblock \emph{Nature}, 610\penalty0 (7930):\penalty0 47--53, 2022.

\bibitem[Fiechter(1994)]{fiechter1994efficient}
Fiechter, C.-N.
\newblock Efficient reinforcement learning.
\newblock In \emph{Conference on Learning Theory}, 1994.

\bibitem[Finn et~al.(2017)Finn, Abbeel, and Levine]{finn2017model}
Finn, C., Abbeel, P., and Levine, S.
\newblock Model-agnostic meta-learning for fast adaptation of deep networks.
\newblock In \emph{International Conference on Machine Learning}, 2017.

\bibitem[Garivier \& Kaufmann(2016)Garivier and Kaufmann]{garivier2016optimal}
Garivier, A. and Kaufmann, E.
\newblock Optimal best arm identification with fixed confidence.
\newblock In \emph{Conference on Learning Theory}, 2016.

\bibitem[Ghavamzadeh et~al.(2015)Ghavamzadeh, Mannor, Pineau, and
  Tamar]{ghavamzadeh2015bayesian}
Ghavamzadeh, M., Mannor, S., Pineau, J., and Tamar, A.
\newblock Bayesian reinforcement learning: A survey.
\newblock \emph{Foundations and Trends{\textregistered} in Machine Learning},
  8\penalty0 (5-6):\penalty0 359--483, 2015.

\bibitem[Jaksch et~al.(2010)Jaksch, Ortner, and Auer]{jaksch2010near}
Jaksch, T., Ortner, R., and Auer, P.
\newblock Near-optimal regret bounds for reinforcement learning.
\newblock \emph{Journal of Machine Learning Research}, 11:\penalty0 1563--1600,
  2010.

\bibitem[Kaufmann et~al.(2016)Kaufmann, Capp{\'e}, and
  Garivier]{kaufmann2016complexity}
Kaufmann, E., Capp{\'e}, O., and Garivier, A.
\newblock On the complexity of best arm identification in multi-armed bandit
  models.
\newblock \emph{Journal of Machine Learning Research}, 17:\penalty0 1--42,
  2016.

\bibitem[Kaufmann et~al.(2021)Kaufmann, M{\'e}nard, Domingues, Jonsson,
  Leurent, and Valko]{kaufmann2021adaptive}
Kaufmann, E., M{\'e}nard, P., Domingues, O.~D., Jonsson, A., Leurent, E., and
  Valko, M.
\newblock Adaptive reward-free exploration.
\newblock In \emph{Algorithmic Learning Theory}, 2021.

\bibitem[Kaufmann et~al.(2023)Kaufmann, Bauersfeld, Loquercio, M{\"u}ller,
  Koltun, and Scaramuzza]{kaufmann2023champion}
Kaufmann, E., Bauersfeld, L., Loquercio, A., M{\"u}ller, M., Koltun, V., and
  Scaramuzza, D.
\newblock Champion-level drone racing using deep reinforcement learning.
\newblock \emph{Nature}, 620\penalty0 (7976):\penalty0 982--987, 2023.

\bibitem[Kausik et~al.(2023)Kausik, Tan, and Tewari]{kausik2023learning}
Kausik, C., Tan, K., and Tewari, A.
\newblock Learning mixtures of markov chains and mdps.
\newblock In \emph{International Conference on Machine Learning}, 2023.

\bibitem[Kumar et~al.(2024)Kumar, Derman, Geist, Levy, and
  Mannor]{kumar2024policy}
Kumar, N., Derman, E., Geist, M., Levy, K.~Y., and Mannor, S.
\newblock Policy gradient for rectangular robust markov decision processes.
\newblock In \emph{Advances in Neural Information Processing Systems}, 2024.

\bibitem[Kwon et~al.(2021{\natexlab{a}})Kwon, Efroni, Caramanis, and
  Mannor]{kwon2021reinforcement}
Kwon, J., Efroni, Y., Caramanis, C., and Mannor, S.
\newblock Reinforcement learning in reward-mixing mdps.
\newblock In \emph{Advances in Neural Information Processing Systems},
  2021{\natexlab{a}}.

\bibitem[Kwon et~al.(2021{\natexlab{b}})Kwon, Efroni, Caramanis, and
  Mannor]{kwon2021rl}
Kwon, J., Efroni, Y., Caramanis, C., and Mannor, S.
\newblock Rl for latent mdps: Regret guarantees and a lower bound.
\newblock In \emph{Advances in Neural Information Processing Systems},
  2021{\natexlab{b}}.

\bibitem[Kwon et~al.(2023{\natexlab{a}})Kwon, Efroni, Caramanis, and
  Mannor]{kwon2023reward}
Kwon, J., Efroni, Y., Caramanis, C., and Mannor, S.
\newblock Reward-mixing mdps with few latent contexts are learnable.
\newblock In \emph{International Conference on Machine Learning},
  2023{\natexlab{a}}.

\bibitem[Kwon et~al.(2023{\natexlab{b}})Kwon, Efroni, Mannor, and
  Caramanis]{kwon2023prospective}
Kwon, J., Efroni, Y., Mannor, S., and Caramanis, C.
\newblock Prospective side information for latent mdps.
\newblock \emph{arXiv preprint arXiv:2310.07596}, 2023{\natexlab{b}}.

\bibitem[Lattimore \& Szepesv{\'a}ri(2020)Lattimore and
  Szepesv{\'a}ri]{lattimore2020bandit}
Lattimore, T. and Szepesv{\'a}ri, C.
\newblock \emph{Bandit algorithms}.
\newblock Cambridge University Press, 2020.

\bibitem[M{\'e}nard et~al.(2021)M{\'e}nard, Domingues, Jonsson, Kaufmann,
  Leurent, and Valko]{menard2021fast}
M{\'e}nard, P., Domingues, O.~D., Jonsson, A., Kaufmann, E., Leurent, E., and
  Valko, M.
\newblock Fast active learning for pure exploration in reinforcement learning.
\newblock In \emph{International Conference on Machine Learning}, 2021.

\bibitem[Mnih et~al.(2015)Mnih, Kavukcuoglu, Silver, Rusu, Veness, Bellemare,
  Graves, Riedmiller, Fidjeland, Ostrovski, et~al.]{mnih2015human}
Mnih, V., Kavukcuoglu, K., Silver, D., Rusu, A.~A., Veness, J., Bellemare,
  M.~G., Graves, A., Riedmiller, M., Fidjeland, A.~K., Ostrovski, G., et~al.
\newblock Human-level control through deep reinforcement learning.
\newblock \emph{Nature}, 518\penalty0 (7540):\penalty0 529--533, 2015.

\bibitem[Osband \& Van~Roy(2016)Osband and Van~Roy]{osband2016lower}
Osband, I. and Van~Roy, B.
\newblock On lower bounds for regret in reinforcement learning.
\newblock \emph{arXiv preprint arXiv:1608.02732}, 2016.

\bibitem[Puterman(2014)]{puterman2014markov}
Puterman, M.~L.
\newblock \emph{Markov decision processes: discrete stochastic dynamic
  programming}.
\newblock John Wiley \& Sons, 2014.

\bibitem[Rakelly et~al.(2019)Rakelly, Zhou, Finn, Levine, and
  Quillen]{rakelly2019efficient}
Rakelly, K., Zhou, A., Finn, C., Levine, S., and Quillen, D.
\newblock Efficient off-policy meta-reinforcement learning via probabilistic
  context variables.
\newblock In \emph{International Conference on Machine Learning}, 2019.

\bibitem[Rimon et~al.(2022)Rimon, Tamar, and Adler]{rimon2022meta}
Rimon, Z., Tamar, A., and Adler, G.
\newblock Meta reinforcement learning with finite training tasks-a density
  estimation approach.
\newblock In \emph{Advances in Neural Information Processing Systems}, 2022.

\bibitem[Schmidhuber(1987)]{schmidhuber1987evolutionary}
Schmidhuber, J.
\newblock \emph{Evolutionary principles in self-referential learning, or on
  learning how to learn: the meta-meta-... hook}.
\newblock PhD thesis, Technische Universit{\"a}t M{\"u}nchen, 1987.

\bibitem[Schulman et~al.(2015)Schulman, Levine, Abbeel, Jordan, and
  Moritz]{schulman2015trust}
Schulman, J., Levine, S., Abbeel, P., Jordan, M., and Moritz, P.
\newblock Trust region policy optimization.
\newblock In \emph{International conference on machine learning}, 2015.

\bibitem[Simchowitz et~al.(2021)Simchowitz, Tosh, Krishnamurthy, Hsu, Lykouris,
  Dudik, and Schapire]{simchowitz2021bayesian}
Simchowitz, M., Tosh, C., Krishnamurthy, A., Hsu, D.~J., Lykouris, T., Dudik,
  M., and Schapire, R.~E.
\newblock Bayesian decision-making under misspecified priors with applications
  to meta-learning.
\newblock In \emph{Advances in Neural Information Processing Systems}, 2021.

\bibitem[Stiennon et~al.(2020)Stiennon, Ouyang, Wu, Ziegler, Lowe, Voss,
  Radford, Amodei, and Christiano]{stiennon2020learning}
Stiennon, N., Ouyang, L., Wu, J., Ziegler, D., Lowe, R., Voss, C., Radford, A.,
  Amodei, D., and Christiano, P.~F.
\newblock Learning to summarize with human feedback.
\newblock In \emph{Advances in Neural Information Processing Systems}, 2020.

\bibitem[Sutton \& Barto(2018)Sutton and Barto]{sutton2018reinforcement}
Sutton, R.~S. and Barto, A.~G.
\newblock \emph{Reinforcement learning: An introduction}.
\newblock MIT press, 2018.

\bibitem[Tamar et~al.(2022)Tamar, Soudry, and
  Zisselman]{tamar2022regularization}
Tamar, A., Soudry, D., and Zisselman, E.
\newblock Regularization guarantees generalization in bayesian reinforcement
  learning through algorithmic stability.
\newblock In \emph{AAAI Conference on Artificial Intelligence}, 2022.

\bibitem[Tirinzoni et~al.(2022)Tirinzoni, Al~Marjani, and
  Kaufmann]{tirinzoni2022near}
Tirinzoni, A., Al~Marjani, A., and Kaufmann, E.
\newblock Near instance-optimal pac reinforcement learning for deterministic
  mdps.
\newblock In \emph{Advances in Neural Information Processing Systems}, 2022.

\bibitem[Tirinzoni et~al.(2023)Tirinzoni, Al-Marjani, and
  Kaufmann]{tirinzoni2023optimistic}
Tirinzoni, A., Al-Marjani, A., and Kaufmann, E.
\newblock Optimistic pac reinforcement learning: the instance-dependent view.
\newblock In \emph{Algorithmic Learning Theory}, 2023.

\bibitem[Wagenmaker et~al.(2022)Wagenmaker, Simchowitz, and
  Jamieson]{wagenmaker2022beyond}
Wagenmaker, A.~J., Simchowitz, M., and Jamieson, K.
\newblock Beyond no regret: Instance-dependent pac reinforcement learning.
\newblock In \emph{Conference on Learning Theory}, 2022.

\bibitem[Wiesemann et~al.(2013)Wiesemann, Kuhn, and
  Rustem]{wiesemann2013robust}
Wiesemann, W., Kuhn, D., and Rustem, B.
\newblock Robust markov decision processes.
\newblock \emph{Mathematics of Operations Research}, 2013.

\bibitem[Xu et~al.(2023)Xu, Hu, Doshi, Rovinsky, Kumar, Gupta, and
  Levine]{xu2023dexterous}
Xu, K., Hu, Z., Doshi, R., Rovinsky, A., Kumar, V., Gupta, A., and Levine, S.
\newblock Dexterous manipulation from images: Autonomous real-world rl via
  substep guidance.
\newblock In \emph{IEEE International Conference on Robotics and Automation},
  2023.

\bibitem[Ye et~al.(2023)Ye, Chen, Wang, and Du]{ye2023power}
Ye, H., Chen, X., Wang, L., and Du, S.~S.
\newblock On the power of pre-training for generalization in {RL}: Provable
  benefits and hardness.
\newblock In \emph{International Conference on Machine Learning}, 2023.

\bibitem[Zhao et~al.(2022)Zhao, Abbeel, and James]{zhao2022effectiveness}
Zhao, M., Abbeel, P., and James, S.
\newblock On the effectiveness of fine-tuning versus meta-reinforcement
  learning.
\newblock In \emph{Advances in Neural Information Processing Systems}, 2022.

\bibitem[Zintgraf et~al.(2019)Zintgraf, Shiarlis, Igl, Schulze, Gal, Hofmann,
  and Whiteson]{zintgraf2019varibad}
Zintgraf, L., Shiarlis, K., Igl, M., Schulze, S., Gal, Y., Hofmann, K., and
  Whiteson, S.
\newblock Varibad: A very good method for bayes-adaptive deep rl via
  meta-learning.
\newblock In \emph{International Conference on Learning Representations}, 2019.

\bibitem[Zintgraf et~al.(2021)Zintgraf, Schulze, Lu, Feng, Igl, Shiarlis, Gal,
  Hofmann, and Whiteson]{zintgraf2021varibad}
Zintgraf, L., Schulze, S., Lu, C., Feng, L., Igl, M., Shiarlis, K., Gal, Y.,
  Hofmann, K., and Whiteson, S.
\newblock Varibad: Variational bayes-adaptive deep rl via meta-learning.
\newblock \emph{The Journal of Machine Learning Research}, 22\penalty0
  (1):\penalty0 13198--13236, 2021.

\bibitem[Zisselman et~al.(2023)Zisselman, Lavie, Soudry, and
  Tamar]{zisselman2023explore}
Zisselman, E., Lavie, I., Soudry, D., and Tamar, A.
\newblock Explore to generalize in zero-shot rl.
\newblock In \emph{Advances in Neural Information Processing Systems}, 2023.

\end{thebibliography}
\bibliographystyle{icml2024}

\newpage
\appendix
\onecolumn

\section{Missing Proofs}
\label{apx:proofs}

In this section, we provide complete derivations to prove the theoretical results presented in the paper.

\subsection{Proof of Theorem~\ref{thr:mdp_upper_bound_separation}}
\label{sec:apx_proofs_separation}

Here we prove the upper bound to the test-time regret under separation of Algorithm~\ref{alg:mdp_separation}, which is a straightforward adaptation of the derivations in~\citet[][Theorem 5]{chen2021understanding} to the finite-horizon setting.

\mdpSeparationUpperBound*
\begin{proof}
    Analogously as in~\citet[][Theorem 5]{chen2021understanding}, the proof is based on showing that the true MDP $\M_i$ will not be eliminated from $\D$ (lines 6-10 in Algorithm~\ref{alg:mdp_separation}) with probability at least $1 - 1 / H$~\citep[][Lemma 4]{chen2021understanding}. Especially, we can write
    \begin{align*}
        P \Big( &\text{``$\M_i$ is eliminated from $\D$''} \Big) \\
        &= P \left( \bigcup_{m = 1}^{M - 1} \text{``$\M_i$ is eliminated from $\D$ at iteration $m$''} \right) \\
        &\leq \sum_{m = 1}^{M - 1} P \Big( \text{``$\M_i$ is eliminated from $\D$ at iteration $m$''} \Big) 
    \end{align*}
    by noting that the loop in lines 2-11 of Algorithm~\ref{alg:mdp_separation} is executed for $M - 1$ iterations and then applying a union bound. Next, we call Lemma~\ref{lemma:mdp_elimination} to prove that the event ``$\M_i$ is eliminated from $\D$ at iteration $m$'' holds with probability less than $1 / MH$.\footnote{Note that the derivations in the corresponding~\citep[][Lemma 5]{chen2021understanding} apply verbatim as there is no assumption on how samples in $\X$ are collected, and whether they are coming from a finite-horizon or an infinite-horizon MDP.}
    
    Then, we just need to prove that those $n$ samples rquired by Lemma~\ref{lemma:mdp_elimination} can be collected efficiently through the sampling routine in Algorithm~\ref{alg:mdp_sampling}, which is where our approach differs from~\citet{chen2021understanding}. In Lemma~\ref{lemma:sampling_routine}, we provide a quick adaptation of their infinite-horizon communicating MDP setting to our finite-horizon reachable MDP setting. 
    
    Now let $H_0$ the number of episodes needed to collect $n$ samples through Algorithm~\ref{alg:mdp_sampling} for every time is called from Algorithm~\ref{alg:mdp_separation} (line 5), which is $M - 1$ times in total. We have
    \begin{equation*}
        \EV [H_0] = h_0 \leq 2 (M - 1) M n \leq \frac{c M^2 \log^2 (SMH / \lambda) \log (MH)}{\lambda^4}.
    \end{equation*}
    Finally, we can write
    \begin{align*}
        R_H (\M_i, \pi) &= \EV \left[ \sum_{h = 1}^{h_0} V_i^* - V_i (\pi_h) \right] + \EV \left[ \sum_{h = h_0}^H V_i^* - V_i (\hat \pi) \right] \leq \frac{c T M^2 \log^2 \left(\frac{SMH}{\lambda}\right) \log (MH)}{\lambda^4}
    \end{align*}
    by noting that $\EV [ \sum_{h = 1}^{h_0} V_i^* - V_i (\pi_h) ] \leq \EV[H_0] T$ through $r_i (s, a) \in [0, 1], \forall \M_i \in \M$, and that $\EV [ \sum_{h = h_0}^H V_i^* - V_i (\hat \pi) ] \leq T$, as it is $V_i^* - V_i (\hat \pi) = 0$ with probability at least $1 - 1 / H$ and $V_i^* - V_i (\hat \pi) \leq T H$ with probability at most $1 / H$.
\end{proof}

\begin{lemma}[\citealt{chen2021understanding}]
\label{lemma:mdp_elimination}
    Let $\X = (s'_1, \ldots, s'_n)$ be a set of $n = \frac{c \log^2 (S M H / \lambda) \log (MH)}{\lambda^4}$ independent samples from $p_i (\cdot | \bar s, \bar a)$ for a large enough constant $c$ and let $\M_1$ be an MDP such that $\| (p_i - p_1) (\cdot|\bar s, \bar a) \|_1 \geq \lambda$. Then, it holds
    \begin{equation*}
        \prod_{s' \in \X} \frac{p_i (s'| \bar s, \bar a)}{p_1 (s'| \bar s, \bar a)} > 1
    \end{equation*}
    with probability at least $1 - 1 / MH$.
\end{lemma}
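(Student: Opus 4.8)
The plan is to recast the claim as a statement about the sign of a log-likelihood ratio and to control its lower tail through a Chernoff bound built on the Bhattacharyya coefficient, which gracefully absorbs the unboundedness of the per-sample log-ratio. Write $p := p_i(\cdot \mid \bar s, \bar a)$ and $q := p_1(\cdot \mid \bar s, \bar a)$, and recall the samples $s_1', \dots, s_n' \in \X$ are i.i.d.\ from $p$. Taking logarithms, the event $\prod_{s' \in \X} p(s')/q(s') > 1$ is exactly $\{ L_n > 0 \}$ with $L_n := \sum_{k=1}^n \log\!\big(p(s_k')/q(s_k')\big)$. Each summand has mean $\EV_{s'\sim p}[\log(p(s')/q(s'))] = \KL(p, q)$, which by Pinsker satisfies $\KL(p,q) \geq \tfrac12 \| p - q \|_1^2 \geq \lambda^2/2 > 0$; hence $\EV[L_n] \geq n\lambda^2/2$ and $\{L_n > 0\}$ is the typical event. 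The whole task reduces to bounding the lower-deviation probability $P(L_n \leq 0)$.

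The \textbf{main obstacle} is that the increments $\log(p(s')/q(s'))$ are unbounded below (they diverge to $-\infty$ as $q(s')/p(s') \to \infty$, and are $+\infty$ when $q(s')=0$), so a naive Hoeffding or bounded-difference argument does not apply. I would sidestep this by evaluating the moment generating function at the single convenient value $t = 1/2$, where it is always finite and equals the Bhattacharyya coefficient: $\EV_{s'\sim p}\big[(q(s')/p(s'))^{1/2}\big] = \sum_{s'} \sqrt{p(s')\,q(s')}$, with states where $q(s')=0$ contributing $0$ (consistently, such states only push $L_n$ towards $+\infty$). Using the link between the Bhattacharyya coefficient, the squared Hellinger distance, and the $\ell_1$ distance, one has $\sum_{s'}\sqrt{p(s')q(s')} = 1 - H^2(p,q)$ with $H^2(p,q) \geq \tfrac18\| p - q \|_1^2 \geq \lambda^2/8$, so that $\sum_{s'}\sqrt{p(s')q(s')} \leq 1 - \lambda^2/8 \leq e^{-\lambda^2/8}$.

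Putting these together by Markov's inequality and independence, $P(L_n \leq 0) = P\big(e^{-L_n/2}\geq 1\big) \leq \EV[e^{-L_n/2}] = \big(\sum_{s'}\sqrt{p(s')q(s')}\big)^n \leq e^{-n\lambda^2/8}$. It then only remains to verify that the prescribed $n = c\log^2(SMH/\lambda)\log(MH)/\lambda^4$ makes the right-hand side at most $1/(MH)$, i.e.\ that $n \geq 8\log(MH)/\lambda^2$; since $\lambda \leq 2$ and $\log^2(SMH/\lambda)\geq 1$, this holds for any sufficiently large constant $c$, with ample margin. This delivers $P\big(\prod_{s'\in\X} p(s')/q(s') > 1\big) \geq 1 - 1/(MH)$, as claimed. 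I would finally remark that this Bhattacharyya route is in fact stronger than required—only $n = \Theta(\log(MH)/\lambda^2)$ samples are essential for the conclusion—so the generous $n$ in the statement leaves a comfortable margin, and the extra $\log^2(SMH/\lambda)$ and $\lambda^{-4}$ factors are harmless overkill at this particular step (they are dictated by the surrounding analysis, not by this lemma).
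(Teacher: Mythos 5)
Your proof is correct, and it takes a genuinely different route from the one the paper relies on. The paper does not reprove this lemma: it imports it verbatim as Lemma~5 of \citet{chen2021understanding}, whose argument (mirrored in this paper's own bandit analogue, Theorem~\ref{thr:bandit_upper_bound}) lower-bounds the mean of the log-likelihood ratio by $n\lambda^2/2$ via Pinsker and then applies a Hoeffding-type concentration bound to the sum $\sum_{s'\in\X}\log\big(p_i(s'|\bar s,\bar a)/p_1(s'|\bar s,\bar a)\big)$. That route must first tame the unbounded range of the per-sample log-ratio (by truncating states of probability $O(\poly(\lambda/SMH))$ and controlling their contribution), and this truncation is exactly where the $\log^2(SMH/\lambda)$ and the extra $\lambda^{-2}$ in the prescribed $n$ come from. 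Your Chernoff bound at $t=1/2$, i.e.\ $P(L_n\le 0)\le\big(\sum_{s'}\sqrt{p(s')q(s')}\big)^n\le e^{-n\lambda^2/8}$ via the Bhattacharyya coefficient and the Hellinger--$\ell_1$ inequality $1-\sum_{s'}\sqrt{p(s')q(s')}\ge\|p-q\|_1^2/8$, sidesteps the boundedness issue entirely (including the $q(s')=0$ states, which you treat consistently), and correctly exposes that only $n=\Theta(\log(MH)/\lambda^2)$ samples are needed for this step, so the stated $n$ succeeds with room to spare. What the paper's route buys is only fidelity to the original reference; your route buys a shorter, self-contained proof and a sharper constant-level understanding of where the over-provisioned factors in $n$ do and do not matter.
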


\begin{lemma}
\label{lemma:sampling_routine}
    Let $\M_i \in \D$ an MDP and let $H_0$ a random variable denoting the number of episodes needed by Algorithm~\ref{alg:mdp_sampling} to collect $n$ samples from $p_i (\cdot | \bar s, \bar a)$ in $\M_i$. We can upper bound the expected number of episodes as $h_0 := \EV [H_0] \leq 2 M n$.
\end{lemma}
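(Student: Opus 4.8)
The plan is to bound the cost of Algorithm~\ref{alg:mdp_sampling} by isolating the single block in which the reaching policy $\pi_i$ of the \emph{true} test task $\M_i$ is deployed: I lower bound the rate at which samples from $p_i(\cdot \,|\, \bar s, \bar a)$ accumulate using only that block, while charging the full per-round cost of $2|\D| \le 2M$ episodes. Contributions from the other policies $\pi_j$ can only accelerate sampling, so discarding them gives a valid upper bound on the number of episodes.

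First I would show that each episode in which $\pi_i$ is run on $\M_i$ yields a fresh sample from $p_i(\cdot \,|\, \bar s, \bar a)$ with probability at least $1/2$. This is where reachability enters: by Assumption~\ref{ass:reachability} together with the choice $\pi_i \in \argmin_{\pi \in \Pi} \EV[X(\bar s | \M_i, \pi)]$, we have $\EV[X(\bar s | \M_i, \pi_i)] \le T/2$, so Markov's inequality gives $\P\big(X(\bar s | \M_i, \pi_i) \le T\big) \ge 1/2$. Whenever $\bar s$ is first reached at some step $t \le T$, taking action $\bar a$ produces the next state $s' \sim p_i(\cdot | \bar s, \bar a)$, which is exactly a clean sample from the true transition, matching the lemma's claim that the collected samples are distributed according to $p_i$.

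Next I would count. Each pass of the outer while loop runs every policy $\{\pi_j : \M_j \in \D\}$ for two episodes, costing $2|\D| \le 2M$ episodes. Restricting attention to the two episodes in which $\pi_i$ is played, each independently contributes a sample with probability $\ge 1/2$, so a single pass collects at least $2 \cdot \tfrac12 = 1$ sample in expectation. Letting $R$ denote the number of passes executed before $N_{\bar s \bar a}$ reaches $n$, this per-pass mean of at least one sample yields $\EV[R] \le n$, and therefore $\EV[H_0] \le 2M \cdot \EV[R] \le 2Mn$.

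The main obstacle is making the last step rigorous, since $R$ is a random stopping time correlated with the samples being counted. I would handle this by coupling the algorithm with the pessimistic process that counts \emph{only} the $\pi_i$-block samples; this process halts no earlier than the true algorithm and hence dominates $R$. Its per-pass contributions are i.i.d.\ with mean at least $1$, so applying Wald's identity to the first pass whose cumulative count exceeds $n$ gives $\EV[R] \le n$ up to a negligible last-pass overshoot of at most two samples. Multiplying $\EV[R]$ by the $\le 2M$ episodes charged per pass then closes the bound $\EV[H_0] \le 2Mn$.
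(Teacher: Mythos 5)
Your proposal is correct and follows essentially the same route as the paper's proof: both use Assumption~\ref{ass:reachability} and Markov's inequality to get a per-episode success probability of at least $1/2$ under $\pi_i$, observe that $\pi_i$ is among the policies deployed in each pass of cost at most $2M$ episodes, and conclude that $n$ passes suffice in expectation. Your explicit handling of the stopping-time correlation via Wald's identity is a welcome refinement of a step the paper leaves implicit, but it does not change the argument.
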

\begin{proof}
     We can follow similar steps as in~\citet[][Lemma 7]{chen2021understanding}. From Assumption~\ref{ass:reachability} we have that $\EV (X (\bar s|\M_i, \pi_i)) \leq T / 2$ for $\pi_i \in \argmin_{\pi \in \Pi} \EV [X (\bar s | \M_i, \pi)]$. Then, by applying the Markov's inequality $P(X(\bar s | \M_i, \pi_i) \geq T) \leq \EV[X(\bar s | \M_i, \pi_i)] / T$ we get 
    \begin{equation}
        X (\bar s | \M_i, \pi_i) \leq T
        \label{eq:Markov_inequality}
    \end{equation}
    with probability at least $1 / 2$. Let $Y$ be the random variable denoting the number of episodes needed to reach state $\bar s$. From~\eqref{eq:Markov_inequality}, we have that $P(Y = k) \leq 1 / 2^k$, which gives $\EV[Y] \leq \sum_{k = 1}^{\infty} k / 2^k = 2$. Since Algorithm~\ref{alg:mdp_sampling} deploys $\pi_j \in \argmin_{\pi \in \Pi} \EV [X (\bar s | \M_j, \pi)]$ for all $\M_j \in \D$ (lines 4-9), then also $\pi_i$ is deployed.
\end{proof}

\subsection{Proof of Theorem~\ref{thr:mdp_lower_bound}}
\label{sec:apx_proofs_lower_bound}

Here we provide complete derivations for the proof of the lower bound to the test-time regret under separation, which we briefly sketched in Section~\ref{sec:lower_bound}. In the proof, we will refer to the hard instance depicted in Figure~\ref{fig:lb_instance_mdp} of the main paper.

\mdpSeparationLowerBound*
\begin{proof}
    The key idea behind this proof is to construct an instance of the problem in which it is hard to minimize the regret without knowing the MDP, whereas it is easy to identify the MDP playing sub-optimal policies. 

    \paragraph{Hard instance.} The instance consists of $M$ MDPs having $2M + 3$ states and $2$ actions each. Figure~\ref{fig:lb_instance_mdp} depicts the sample MDP $\M_i \in \M$, but all of the MDPs in the instance are similarly constructed. For any $\M_i \in \M$, the state $s_{in}$ is the initial state such that $p (s_1 | s_{in}, a_1) = p (s_{M + 1} | s_{in}, a_2) = 1$, $s_L$ is an absorbing state with $p(s_L | s_L, a_1) = p(s_L | s_L, a_2) = 1$, $s_H$ is an high-reward state such that $p (s_L | s_H, a_1) = p (s_L | s_H, a_2) = 1$ and $r (s_H, a_1) = r(s_H, a_2) = 1$. For all the other states $s \in \S \setminus (s_H)$ we have $r (s, a_1) = r (s, a_2) = 0$. The states $s \in \S \setminus (s_H, s_L, s_{in})$ are arranged in two different chains: $(s_1, \ldots, s_i, \ldots, s_M)$ on the left and $(s_{M + 1}, \ldots, s_{M + i}, \ldots, s_{2M})$ on the right, respectively. In every state of those chains, the action $a_2$ gives a deterministic transition to the next state in their respective chain, i.e., $p(s_{j + 1} | s_j, a_2) = 1, \forall s_j \in (s_1, \ldots, s_{M - 1}) \cup (s_{M + 1}, \ldots, s_{2M - 1})$ and self-loops $p (s_M | s_M, a_2) = p(s_{2M} | s_{2M}, a_2) = 1$ for the closing end of the chains. For any $\M_i \in \M$, the state $s_i$ is the one leading to the state $s_H$ with higher probability than all of the other states $p_i (s_H | s_i, a_1) = 1$. For all of the other states in the left chain, i.e., $s \in (s_1, \ldots, s_M) \setminus (s_i)$, the transition to $s_H$ has probability $p_i (s_H | s, a_1) = 1 - \Delta_1 = 1 - 1 / \sqrt{H}$. In the right chain, the states are divided in two groups of $M/2$ states, which are $\mathcal{G}_1 := (s_{M + i}, \ldots, s_{\frac{3M}{2} + i})$ and $\mathcal{G}_2 := (s_{M + 1}, \ldots, s_{M + i - 1}) \cup (s_{\frac{3M}{2} + i}, \ldots, s_{2M})$.\footnote{Here we consider $i \leq M / 2$ for the sake of clarity. If $i > M /2$ some of the indices of $\mathcal{G}_1$ will exceed $2M$, so that the exceeding states are to be taken from the start of the right chain $(s_{M + i}, s_{M + u})$, where $u$ is the number of exceeding indices.}
    The transition model is equivalent within the two groups, which is $p_i (s_H | s \in \mathcal{G}_1, a_1) = 1 - \Delta_2 = 1 - \log (H) / \sqrt{H}$ and $p_i (s_H | s \in \mathcal{G}_{2}, a_1) = 1 - \Delta_2 - \lambda / 2 = 1 - \log(H) / \sqrt{H} - \lambda / 2$. Thanks to the construction of $\mathcal{G}_1$ and $\mathcal{G}_2$, for every pair $\M_i, \M_j \in \M$ there is at least one state-action pair for which the $\lambda$-separation holds (Assumption~\ref{ass:separation_mdp}). To ease the visual inspection of the sample MDP $\M_i$ in Figure~\ref{fig:lb_instance_mdp}, the state $s_i$ and related transitions are blue colored, the states in $\mathcal{G}_1$ and related transitions are red colored.

    \paragraph{Event.} Together with the described instance, we use terminology from best policy identification (see Appendix~\ref{apx:best_policy_identification} for details) to define a convenient event around which the analysis is centered. We consider a class of stopping rules $\tau$ such that $\EV_{\M_i} [\tau] \leq H$, and we define:
    \begin{equation*}
        \mathcal{E} = \Big\{ \hat \pi_\tau \in \argmax\nolimits_{\pi \in \Pi} V_i(\pi) \ : \ \text{``best policy is identified within $H$ steps''} \Big\}.
    \end{equation*}
    To derive the lower bound, we consider the two cases in which $\mathcal{E}$ hold or does not hold with high probability, respectively.

    \paragraph{Bad Event.}
    If the event $\mathcal{E}$ does not hold with high probability, i.e., $\mathbb{P} (\mathcal{E}) < 1 - \delta$, then we can show that the regret scales with $\Omega (\sqrt{H})$.

    Let us consider any triplet $(\pi, \tau, \hat \pi_\tau)$. Without loss of generality, we take $\EV_{\M_i}[\tau] = h$, from which we have
    \begin{equation}
    \label{eq:regret_bad_event_mdp}
        \R_{H} (\M_i, \pi) = \R_{h} (\M_i, \pi) + \sum_{\bar h = h}^H V_{i}^* - V_{i} (\hat\pi_\tau) \geq \R_{h} (\M_i, \pi) + (H - h) \Delta_1
    \end{equation}
    with probability at least $\delta$. The latter inequality is obtained by noting that the set of policies going to the left in the initial state $\pi_1 (a_1 | s_{in}) = 1$, then taking action $a_1$ at some state $s_j \in (s_1, \ldots, s_M)$, and then taking the same action until the episode ends (formally, $\pi_t (a_2 | s_t) = 1$ for all $t < j$, $\pi_j (a_1 | s_j) = 1$, and $\pi_t (a_1 | s_H) = \pi_t (a_1 | s_L) = 1$ for all $t > j$) include an optimal policy for every $\M_i \in \M$. We denote this \emph{sufficient} set of policies as $\widetilde\Pi$. For any MDP $\M_i \in \M$, it holds $V_{i} (\pi) = \Delta_1$ for all $\pi \in \widetilde\Pi \setminus (\pi^*_i)$ and $V^*_i = 1$, which gives the above.

    Then, we lower bound the term $\R_{h} (\M_i, \pi)$ in \eqref{eq:regret_bad_event_mdp} through regret minimization. Due to how the instance is constructed, there is no incentive to take action $a_2$ in $s_{in}$ since the best policy identification fails in the bad event. Thus, we restrict the set of policies to $\widetilde\Pi$ again. Notably, this set of policies is finite, having size $|\widetilde\Pi| = M$. We can cast the regret minimization problem over this set of policies as a bandit with $M$ actions with parameters $(\mu_j = V_{i} (\pi_j))_{j \in [M]}$ for some arbitrary ordering of the policies in $\widetilde\Pi$. It is easy to see that the regret of the original MDP problem cannot be smaller than the regret of the latter bandit reformulation, which we can lower bound through the techniques in the proof of Theorem~\ref{thr:bandit_lower_bound}. We have
    \begin{equation}
    \label{eq:regret_reduced_to_bandit}
        \R_{h} (\M_i, \pi) \geq \frac{h \Delta_1}{8} \exp \left( - \frac{ h (\Delta_1)^2 }{2} \right).
    \end{equation}
    Finally, substituting~\eqref{eq:regret_reduced_to_bandit} into~\eqref{eq:regret_bad_event_mdp} we get
    \begin{equation*}
        \R_{H} (\M_i, \pi) \geq \frac{h \Delta_1}{8} \exp \left( - \frac{ h (\Delta_1)^2 }{2} \right) + (H - h) \Delta_1 \geq \frac{\sqrt{H}}{8} \exp \left( - \frac{1}{2} \right)
    \end{equation*}
    with probability at least $\delta$, where the last inequality is obtained by taking $\Delta_1 = 1 / \sqrt{H}$ and noting that the left-hand side is minimized for $\EV_{\M_i} [\tau] = h = H$. 
    
    \paragraph{Good Event.}
    The previous result states that the regret is at least $\Omega (\sqrt{H})$ when the event $\mathcal{E}$ does not hold with high probability. This hints that solving the best policy identification problem is necessary to minimize the regret. To derive the lower bound, we instantiate a proper best policy identification problem on the considered instance $\M$ and we derive the corresponding sample complexity through Lemma~\ref{lemma:best_policy}. We have $\EV_{\M_i} [\tau] \geq T^* (\M_i)^{-1} \log (1 / 2.4 \delta)$ where
    \begin{equation*}
        T^* (\M_i) = \sup_{\omega \in \Sigma (\M_i)} \inf_{\M_j \in \M_{-i}} \sum_{s, a} \omega (s, a) \KL_{\M_i | \M_j} (s,a).
    \end{equation*}
    From Assumption~\ref{ass:separation_mdp} and the Pinsker's inequality we have
    \begin{equation*}
        \KL_{\M_i | \M_j} (s,a) \geq 4 \text{TV}^2 (p_i(\cdot | s, a), p_j (\cdot | s, a)) \geq  \| (p_i - p_j)(\cdot | s, a) \|_1^2 \geq \lambda^2.
    \end{equation*}
    By staring at the instance, it can be seen that the allocation vector attaining the supremum is the one assigning even probabilities to all the pairs $(s_{M + x},a_1)_{x \in [M]}$, as it guarantees $ \omega(s,a) = 1 / TM$ for at least two revealing state-action pairs against any MDP $\M_j \in \M_{- i}$, such that $\sum_{s, a} \omega (s, a) \KL_{\M_i | \M_j} (s, a) \geq \frac{2\lambda^2}{TM}$, while any other allocation can be hacked by the infimum over $\M_j \in \M_{-i}$ to a lesser value.\footnote{There are actually other allocation vectors that have equivalent value of $T^* (\M_i)$, which is the one assigning even probabilities to all the pairs $(s, a_1)_{s \in \mathcal{G}_1}$ or $(s, a_1)_{s \in \mathcal{G}_2}$. For the sake of the proof, we use the most convenient to algebraic manipulations.}
    
    We just need to show that the desired allocation can be obtained and does not violate the flow constraints of the MDP (see the statement of Lemma~\ref{lemma:best_policy}). We set $\omega (s_1, a_2, 1) = 1$, which implies $\sum_a \omega (s_{M + x}, a) \leq 1 / T, \forall x \in [M]$, since the states of the right chain cannot be visited more than once in an episode. Then, we set $\omega (s_{M + x}, a_1) = 1 / TM, \forall x \in [M]$ from the desired allocation, which gives $\omega (s_{M + x}, a_1, x + 1) = 1 / M, \forall x \in [M]$. We have
    \begin{align*}
        \omega(s_{M + 1}, a_1, 2) = \frac{1}{M} \ \text{ and } \ &\omega(s_{M + 1}, a_2, 2) = \frac{M - 1}{M} \\ 
        &\text{from } \ \sum_a \omega (s_{M + 1}, a, 2) = \sum_{s', a'} p_i (s_{M + 1} | s', a') \omega (s', a', 1) = \omega (s_1, a_2, 1) = 1, \\
        \omega(s_{M + 2}, a_1, 3) = \frac{1}{M} \ \text{ and } \ &\omega(s_{M + 2}, a_2, 3) = \frac{M - 2}{M} \\ 
        &\text{from } \ \sum_a \omega (s_{M + 2}, a, 3) = \sum_{s', a'} p_i (s_{M + 2} | s', a') \omega (s', a', 2) = \omega (s_{M + 1}, a_2, 2) = \frac{M - 1}{M}, \\
        \omega(s_{M + 3}, a_1, 4) = \frac{1}{M} \ \text{ and } \ &\omega(s_{M + 3}, a_2, 4) = \frac{M - 3}{M} \\ 
        &\text{from } \ \sum_a \omega (s_{M + 3}, a, 4) = \sum_{s', a'} p_i (s_{M + 3} | s', a') \omega (s', a', 3) = \omega (s_{M + 2}, a_2, 3) = \frac{M - 2}{M}, \\
        \ldots \\
        \omega(s_{2M}, a_1, M + 1) = \frac{1}{M} \ \text{ and } \ &\omega(s_{2M}, a_2, M + 1) = 0 \\ 
        &\hspace{-0.4cm}\text{from } \ \sum_a \omega (s_{2M}, a, M + 1) = \sum_{s', a'} p_i (s_{2M} | s', a') \omega (s', a', M) = \omega (s_{2M - 1}, a_2, M) = \frac{1}{M},
    \end{align*}
    which gives $\omega (s_{M + x}, a_2, x) = \frac{M - x}{M}, \forall x \in [M]$, while all of the additional probability to have $\omega (s, a, t) \in \Prob (\S \times \A), \forall t \in [T]$ is absorbed by $s_L$ and $s_H$.
    
    Having proved that the desired allocation complies to the flow constraints, we proceed as
    \begin{equation*}
        \EV_{\M_i} [\tau] \geq \frac{TM}{2\lambda^2} \log \left( \frac{1}{2.4 \delta} \right).
    \end{equation*}
    Finally, we can derive the lower bound through
    \begin{align*}
        \R_H (\M_i, \pi) \geq \EV\nolimits_{\M_i} [\tau] \Delta_2 &= \frac{TM}{2\lambda^2} \frac{\log(H)}{ \sqrt{H}} \log \left( \frac{1}{2.4\delta} \right) \\
        &\geq \frac{1}{4 (\sqrt{C} - \log (C)) } \frac{TM \log (H)}{\lambda} \log \left( \frac{1}{2.4\delta} \right)
    \end{align*}
    where the last inequality is obtained by exploiting $H \leq C$ and that transition probabilities are in $[0, 1]$ to write $\frac{\lambda}{2} + \frac{\log (H)}{\sqrt{H}} \leq 1$, which gives $\lambda \leq \frac{ 2( \sqrt{C} - \log(C))}{\sqrt{H}}$.
\end{proof}

\subsection{Proofs of Section~\ref{sec:beyond_separation}}
\label{sec:apx_proofs_beyond_separation}

Here we report the proofs for the test-time regret upper bounds provided in Theorem~\ref{thr:mdp_upper_bound_clustering}, \ref{thr:mdp_upper_bound_tree}, \ref{thr:mdp_upper_bound_revealing_policies}, respectively.

\mdpClusteringUpperBound*
\begin{proof}
    The derivations are straightforward following the steps in the proof of Theorem~\ref{thr:mdp_upper_bound_separation} and being careful to count the expected number of episodes for both the ``Identify'' stages. 
    
    For the first ``Identify'' stage (lines 3-12 in Algorithm~\ref{alg:mdp_clustering}), we want to make sure that the cluster $\C^i_k$ to which the test task belongs, i.e., $\M_i \in \C_k^i$, is not eliminated from the set $\C$ with probability at least $1 - 1 / 2H$. We write
    \begin{align*}
        P \Big( &\text{``$\C_k^i$ is eliminated from $\C$''} \Big) \leq \sum_{u = 1}^{K - 1} P \Big( \text{``$\C_k^i$ is eliminated from $\C$ at iteration $u$''} \Big) 
    \end{align*}
    by noting that the loop is executed $K - 1$ times and through a union bound on the iterations. Next, we call Lemma~\ref{lemma:mdp_elimination} on the MDPs $\M_1, \M_2$ selected at line 4 (Algorithm~\ref{alg:mdp_clustering}) on the set $\X$ of samples collected at line 5 (Algorithm~\ref{alg:mdp_clustering}) to prove that the event ``$\C_k^i$ is eliminated from $\C$ at iteration $u$'' holds with probability less than $1 / 2KH$, and we further invoke Lemma~\ref{lemma:sampling_routine} to prove that $\X$ can be collected efficiently from the sampling routine (Algorithm~\ref{alg:mdp_sampling}).
    Let us denote $H_0$ the random variable associated to the number of episodes spent in the first ``Identify'' stage. We have
    \begin{equation}
    \label{eq:mdp_clustering_proof_1}
        \EV[H_0] = h_0 \leq 2 (K - 1) K n_{\C} \leq \frac{c_0 K^2 \log^2 (SKH / \lambda) \log (KH)}{\lambda^4}.
    \end{equation}

    Then, we look at the second ``Identify'' stage (lines 3-11 in Algorithm~\ref{alg:mdp_separation}), which is called at line 12 of Algorithm~\ref{alg:mdp_clustering} on the cluster $\hat \C$ resulting from the previous stage. Just like before, we want to make sure that the MDP $\M_i$ is not eliminated from the set $\hat \C$ with probability at least $1 - 1 / 2H$. We write
    \begin{align*}
        P \Big( &\text{``$\M_i$ is eliminated from $\hat\C$''} \Big) \leq \sum_{u = 1}^{N - 1} P \Big( \text{``$\M_i$ is eliminated from $\hat \C$ at iteration $u$''} \Big) 
    \end{align*}
    by noting that the loop is executed at most $N - 1$ times and through a union bound on the iterations. This follows verbatim the proof of Theorem~\ref{thr:mdp_upper_bound_separation} on the cluster $\hat \C$ insetad of $\M$, for which we can call Lemma~\ref{lemma:mdp_elimination} and Lemma~\ref{lemma:sampling_routine} to get similar results.
    We denote as $H_1$ the random variable associated to the number of episodes spent in the second ``Identify'' stage. We have
    \begin{equation}
    \label{eq:mdp_clustering_proof_2}
        \EV [H_1] = h_1 \leq 2 (N - 1) N n \leq \frac{c_1 N^2 \log^2 (SNH / \lambda) \log (NH)}{\lambda^4}.
    \end{equation}
    
    Then, we can call yet another union bound on the events defined for the two ``Identify'' stages and, with similar considerations as in Theorem~\ref{thr:mdp_upper_bound_separation}, we write
    \begin{align*}
        R_H (\M_i, \pi) &= \EV \left[ \sum_{h = 1}^{h_0 + h_1} V_i^* - V_i (\pi_h) \right] + \EV \left[ \sum_{h = h_0 + h_1}^H V_i^* - V_i (\hat \pi) \right] \\ 
        &\leq \frac{c_0 T K^2 \log^2 (SKH / \lambda) \log (KH)}{\lambda^4} + \frac{c_1 T N^2 \log^2 (SNH / \lambda) \log (NH)}{\lambda^4} \\
        &\leq \frac{c T (K^2 + N^2) \log^2 (SNH / \lambda) \log (NH)}{\lambda^4}
    \end{align*}
    where we plugged~\eqref{eq:mdp_clustering_proof_1} and~\eqref{eq:mdp_clustering_proof_2} together with $\EV [ \sum_{h = 1}^{h_0 + h_1} V_i^* - V_i (\pi_h) ] \leq \EV[H_0 + H_1] T$ to get the first inequality, $c = \max (c_0, c_1)$ and $N > K$ to obtain the second inequality.
\end{proof}

\mdpTreeUpperBound*
\begin{proof}
    The proof follows derivations of Theorem~\ref{thr:mdp_upper_bound_separation} with some slight yet crucial modifications. Just as in Theorem~\ref{thr:mdp_upper_bound_separation}, we want to show that the true MDP $\M_i$ will not be eliminated from $\D$ (lines 2-18 in Algorithm~\ref{alg:mdp_tree}) with probability at least $1 - 1/H$. However, the number of times the loop in lines 3-18 of Algorithm~\ref{alg:mdp_tree} is executed is not $M$, as in Theorem~\ref{thr:mdp_upper_bound_separation}, but depends on the depth of the tree structure given by Assumption~\ref{ass:mdp_tree}. Since the size of $\D$ is reduced at every iteration by a factor $\beta$ at least, we have
    \begin{equation*}
        M \beta^\text{depth} \leq 1 \quad \implies \quad \text{depth} \leq \frac{\log (1 / M)}{\log (\beta)} = \log_{1 / \beta} (M) =: d.
    \end{equation*}
    With the latter, we can write
    \begin{equation*}
        P \Big( \text{``$\M_i$ is eliminated from $\D$''} \Big) 
        \leq \sum_{m = 1}^{d} P \Big( \text{``$\M_i$ is eliminated from $\D$ at iteration $m$''} \Big) 
    \end{equation*}
    through a union bound on the iterations. Then, we call Lemma~\ref{lemma:mdp_elimination} (with care of substituting $M$ with $d$ the result holds verbatim) to prove that each event ``$\M_i$ is eliminated from $\D$ at iteration $m$'' in the summation holds with probability less than $1 / d H$.

    For each iteration of the outer loop (lines 3-18), we want to prove that the $n$ samples from $(\bar s, \bar a)$ can be collected efficiently through the inner loop (lines 6-12). We follow similar steps as in Lemma~\ref{lemma:sampling_routine}. Through Assumption~\ref{ass:strong_reachability}, we have that $\EV [X (\bar s, \bar a | \M_i, \pi)] \leq T / 2$. By applying the Markov's inequality we get $X (\bar s, \bar a | \M_i, \pi) \leq T$ with probability at least $1 / 2$. Let $Y$ the random variable denoting the number of episodes needed to reach state $(\bar s, \bar a)$ in the test MDP $\M_i$. We have that $P(Y = k) \leq 1 / 2^k$, which gives $\EV[Y] \leq \sum_{k = 1}^{\infty} k / 2^k = 2$. Thus, the expected number of episodes to collect the $n$ samples is upper bounded by $2n$.

    Let $H_0$ denote the random variable associated to the number of episodes spent in the ``Identify'' stage as a whole (lines 2-18). From the considerations above, we have 
    \begin{equation}
    \label{eq:mdp_tree_proof_1}
        \EV [H_0] = h_0 \leq 2 d n \leq \frac{c d \log^2 (S d H / \lambda) \log (d H)}{\lambda^4}.
    \end{equation}
    Then, we can write
    \begin{align*}
        R_H (\M_i, \pi) &= \EV \left[ \sum_{h = 1}^{h_0} V_i^* - V_i (\pi_h) \right] + \EV \left[ \sum_{h = h_0}^H V_i^* - V_i (\hat \pi) \right] \leq \frac{c T d \log^2 \left(\frac{S d H}{\lambda}\right) \log (d H)}{\lambda^4}
    \end{align*}
    where we plugged~\eqref{eq:mdp_tree_proof_1} together with $\EV [ \sum_{h = 1}^{h_0} V_i^* - V_i (\pi_h) ] \leq \EV[H_0] T$ to get the inequality.
\end{proof}

\mdpRevealingUpperBound*
\begin{proof}
    The proof follows the derivation of Theorem~\ref{thr:mdp_upper_bound_separation} verbatim, with the only difference that the samples $\X$ for each iteration of the loop in the ``Identify'' stage (lines 13-21 of Algorithm~\ref{alg:mdp_revealing_policies}) are entirely collected during the ``Explore'' stage (lines 3-11). Analogously as before, we show that the true MDP $\M_i$ will not be eliminated from $\D$ (lines 6-10 in Algorithm~\ref{alg:mdp_separation}) with probability at least $1 - 1 / H$. Especially, we can write
    \begin{align*}
        P \Big( &\text{``$\M_i$ is eliminated from $\D$''} \Big) 
        \leq \sum_{m = 1}^{M - 1} P \Big( \text{``$\M_i$ is eliminated from $\D$ at iteration $m$''} \Big) 
    \end{align*}
    by noting that the loop in the ``Identify'' stage is executed for $M - 1$ iterations and then applying a union bound. Just like before, we call Lemma~\ref{lemma:mdp_elimination} to prove that the event ``$\M_i$ is eliminated from $\D$ at iteration $m$'' holds with probability less than $1 / MH$, and then we invoke Lemma~\ref{lemma:sampling_routine} to prove that the ``Explore'' stage can collect the desired number of samples efficiently.
    Let $H_0$ denote the random variable associated to the number of episodes spent in the ``Explore'' stage. Following similar considerations as in the proof of Theorem~\ref{thr:mdp_upper_bound_separation}, we have 
    \begin{equation}
    \label{eq:mdp_revealing_proof_1}
        \EV [H_0] = h_0 \leq 2 I n \leq \frac{c I \log^2 (SMH / \lambda) \log (MH)}{\lambda^4}.
    \end{equation}
    Then, we can write
    \begin{align*}
        R_H (\M_i, \pi) &= \EV \left[ \sum_{h = 1}^{h_0} V_i^* - V_i (\pi_h) \right] + \EV \left[ \sum_{h = h_0}^H V_i^* - V_i (\hat \pi) \right] \leq \frac{c T I \log^2 \left(\frac{SMH}{\lambda}\right) \log (MH)}{\lambda^4}
    \end{align*}
    where we plugged~\eqref{eq:mdp_revealing_proof_1} together with $\EV [ \sum_{h = 1}^{h_0} V_i^* - V_i (\pi_h) ] \leq \EV[H_0] T$ to get the inequality.
\end{proof}

\section{Best Policy Identification in Finite-Horizon MDPs: A Tailored Lower Bound}
\label{apx:best_policy_identification}

In Best Policy Identification~\citep[BPI,][]{fiechter1994efficient}, the learner interacts with an unknown MDP $\M_i$ with the goal of minimizing the expected number of samples to be taken in order to tell an optimal policy $\pi^* \in \argmax_{\pi \in \Pi} V_{i} (\pi)$ for $\M_i$ with probability at least $1 - \delta$, where $\delta \in (0, 1)$ is a fixed confidence. 

The literature provides theoretical guarantees on the latter expected number of samples, called \emph{sample complexity}, in a variety of settings ranging from \emph{worst-case} results for discounted~\citep{gheshlaghi2013minimax, agarwal2020model} and finite-horizon MDPs~\cite{dann2015sample, dann2019policy, kaufmann2021adaptive, menard2021fast} to \emph{instance-dependent} analyses~\citep{al2021adaptive, al2021navigating, wagenmaker2022beyond, tirinzoni2022near, tirinzoni2023optimistic, al2023towards}.

For the purpose of deriving a lower bound for test-time regret minimization (Theorem~\ref{thr:mdp_lower_bound}), we use, as a building block, an instance-dependent, non-asymptotic lower bound to the sample complexity of any $\delta$-PC (Probably Correct) BPI algorithm in finite-horizon MDPs.\footnote{A $\delta$-PC algorithm~\citep[e.g.,][]{al2021navigating} is an algorithm that is guaranteed to output an optimal policy $\pi^*$ with probability at least $1 - \delta$ with a finite sample complexity.} To the best of our knowledge, the only result of this kind is given in~\citet[][Theorem 2]{al2023towards}. Here we derive an alternative result that is tailored to our setting, i.e., in which the set of possible MDPs is restricted to a finite set $\M$ fulfilling the $\lambda$-separation (Assumption~\ref{ass:separation_mdp}).

\paragraph{Additional Notation.}
Let $\Hist_h := (s_t, a_t, r_t)_{t \in [T]}$ be a trajectory collected by executing a policy $\pi_h$ at the episode $h$. We denote $\F_h := \sigma((\Hist_{h'})_{h' \in [h]})$ the sigma algebra of the trajectories up to episode $h$, such that $(\F_{h'})_{h' \in [h]}$ is the corresponding filtration. We define
\begin{itemize}
    \item $(\pi_{h'} : \F_{h' - 1} \to \Prob(\A))_{h' \in [h]}$ a \emph{sampling rule} that determines the policy to be run at each episode given the past observations;
    \item $\tau$ a \emph{stopping rule} that gives the time at which the sampling process is stopped given past observations;
    \item $\hat \pi_\tau \in \Pi$ a \emph{decision rule}, which is the policy selected when $\tau$ is triggered, i.e., the best guess on the optimal policy given past observations.
\end{itemize}
We denote as $\EV [\tau]$ the sample complexity of the BPI problem. Notably, the identification can span several episodes of our finite-horizon MDP setting, which means that at any step $h'$ such that $mod (k', T)  = 0$ the process will be reset to state $s_1$.
To simplify the analysis, we assume that whenever the stopping rule $\tau$ is triggered, the process proceeds until the end of the episode, which means the sample complexity is a multiple of $T$.

Now, we have all of the elements to derive our tailored lower bound. Specifically, we adapt to our BPI problem of interest the result~\citep[][Proposition 2]{al2021navigating}, which was originally derived for the infinite-horizon and $\delta$-asymptotic setting. We obtain the following.

\begin{lemma}[Best policy identification]
\label{lemma:best_policy}
    Let assume all the $\M_j \in \M$ admit unique optimal policies. For $\M_i \in \M$, let us define the set of allocation vectors
    \begin{align*}
        \Sigma (\M_i) = \Big\{ \omega \in \Prob(\S \times \A) :\quad &\omega (s, a) =  \frac{1}{T} \sum_{t \in [T]} \omega (s, a, t), \\ &\omega (\cdot, \cdot, 1) \in \Prob (\S \times \A), \quad \sum_{a \in \A} \omega (s_1, a, 1) = 1, \\
        &\forall (s, t) \in \S \times (2, \ldots, T), \quad \sum_{a \in \A} \omega (s, a, t) = \sum_{s', a'} p_{i} (s | s', a') \omega (s', a', t - 1)  \Big\}.
    \end{align*}
    Let $\M_{-i} := \M \subseteq \M_i$. For $\delta \in (0, 1)$, any $\delta$-PC BPI algorithm has sample complexity
    \begin{equation*}
        \EV_{\M_i} [\tau] \geq T^* (\M_i)^{-1} \log (1 / 2.4 \delta) \quad \text{ where } \quad T^* (\M_i) = \sup_{\omega \in \Sigma (\M_i)} \inf_{\M_j \in \M_{-i}} \sum_{s, a} \omega (s, a) \KL_{\M_i | \M_j} (s,a).
    \end{equation*}
\end{lemma}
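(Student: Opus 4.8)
The plan is to adapt the change-of-measure (transportation) machinery underlying the instance-dependent BPI lower bound of~\citet[Proposition 2]{al2021navigating} from the infinite-horizon setting to our finite-horizon, multi-episode regime, and then to optimise the resulting inequality over allocations and alternatives. First I would fix the true MDP $\M_i$ and an arbitrary alternative $\M_j \in \M_{-i}$ whose optimal policy differs from $\pi_i^* \in \argmax_{\pi \in \Pi} V_i(\pi)$ (under the uniqueness assumption, and in the separated instances of interest such as the one used in Theorem~\ref{thr:mdp_lower_bound}, this is every $\M_j \in \M_{-i}$). Consider the $\F_\tau$-measurable event $\mathcal{E} = \{ \hat\pi_\tau = \pi_i^* \}$ that the decision rule returns the optimal policy of $\M_i$. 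The $\delta$-PC guarantee forces $\P_{\M_i}(\mathcal{E}) \geq 1-\delta$, while $\P_{\M_j}(\mathcal{E}) \leq \delta$ since on $\M_j$ the algorithm must instead return the distinct policy $\pi_j^*$ with probability at least $1-\delta$.

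The key step is the transportation lemma. Letting $N_{s,a}(\tau)$ denote the number of visits to $(s,a)$ before stopping, I would decompose the log-likelihood ratio of the trajectory filtration $\F_\tau$ between $\M_i$ and $\M_j$. Since the two MDPs share $\S,\A,r$ and differ only in dynamics (Assumption~\ref{ass:separation_mdp}), this ratio telescopes into a sum of per-transition terms, and a Wald-type optional-stopping argument across episode boundaries (recall $\tau$ is a multiple of $T$ and the chain resets at $s_1$ every $T$ steps) yields
\[
\sum_{(s,a)} \EV_{\M_i}[N_{s,a}(\tau)]\, \KL\big(p_i(\cdot\,|\,s,a),\, p_j(\cdot\,|\,s,a)\big) = \KL\big(\P_{\M_i}|_{\F_\tau},\, \P_{\M_j}|_{\F_\tau}\big).
\]
Applying the data-processing inequality to the binary partition induced by $\mathcal{E}$, together with the standard estimate $\mathrm{kl}(1-\delta,\delta) \geq \log(1/(2.4\delta))$, and finally bounding the forward divergence below by the larger symmetrised divergence $\KL_{\M_i|\M_j}(s,a)$ (which only weakens the bound and matches the statement), I obtain
\[
\sum_{(s,a)} \EV_{\M_i}[N_{s,a}(\tau)]\, \KL_{\M_i|\M_j}(s,a) \geq \log\big(1/(2.4\delta)\big).
\]

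Then I would normalise. Introducing the time-indexed occupancy $\omega(s,a,t) := \EV_{\M_i}[N_{s,a,t}(\tau)]\,/\,(\EV_{\M_i}[\tau]/T)$ and $\omega(s,a) = \tfrac{1}{T}\sum_{t}\omega(s,a,t)$ gives $\EV_{\M_i}[N_{s,a}(\tau)] = \EV_{\M_i}[\tau]\,\omega(s,a)$. Because each episode is run with a policy that is Markovian within the episode and stopping occurs at episode boundaries, the averaged occupancy satisfies the Bellman-flow constraints of $\M_i$, so $\omega \in \Sigma(\M_i)$; verifying this membership is a routine but necessary occupancy-measure computation. Substituting and rearranging yields $\EV_{\M_i}[\tau] \geq \log(1/(2.4\delta))\,/\,\sum_{s,a}\omega(s,a)\,\KL_{\M_i|\M_j}(s,a)$ for every such $\M_j$; taking the worst alternative on the right and using that the algorithm's feasible allocation satisfies $\inf_{\M_j\in\M_{-i}}\sum_{s,a}\omega(s,a)\,\KL_{\M_i|\M_j}(s,a) \leq T^*(\M_i)$ delivers the claimed bound $\EV_{\M_i}[\tau] \geq T^*(\M_i)^{-1}\log(1/(2.4\delta))$.

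I expect the main obstacle to be the transportation lemma in the multi-episode finite-horizon regime, specifically the rigorous optional-stopping (Wald) argument converting the telescoped per-transition log-likelihood ratio into the visit-count-weighted KL under the random stopping time $\tau$, while correctly accounting for the reset structure at episode boundaries. The verification that the induced occupancy lies in $\Sigma(\M_i)$, and the (implicit) restriction to alternatives with genuinely distinct optimal policies needed for the PC contradiction, are secondary but must be handled carefully.
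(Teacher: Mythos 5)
Your proposal is correct and reaches the same destination by the same underlying mechanism (a change-of-measure lower bound in the style of \citet[Proposition 2]{al2021navigating}, adapted to the finite-horizon, episodic setting), but the execution differs from the paper's in a way worth noting. The paper does not derive the transportation inequality itself: it cites the generative-model sample-complexity bound of \citet{al2021adaptive} directly in the symmetrised-KL form, then encodes the episodic flow constraints into an explicit optimisation problem over counts $n_{sat}$, normalises the \emph{optimal solution} of that program by its value $N^*$ to land in $\Sigma(\M_i)$, and finally invokes \citet[Proposition 10]{al2021navigating} for $\EV_{\M_i}[\tau] \geq N^*$. You instead derive the transportation inequality from scratch (log-likelihood decomposition, Wald/optional stopping across episode resets, data processing on the event $\{\hat\pi_\tau = \pi_i^*\}$) and then normalise the \emph{algorithm's own} expected visitation counts by $\EV_{\M_i}[\tau]/T$, which lets you conclude directly from $\inf_{\M_j} \sum_{s,a}\omega(s,a)\KL_{\M_i|\M_j}(s,a) \leq T^*(\M_i)$ without the intermediate linear program or the cited Proposition 10. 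Your route is more self-contained; the price is that you must actually carry out the Wald argument and the occupancy-membership verification that the paper offloads to citations. Two small points: your caveat about alternatives sharing the optimal policy of $\M_i$ is real but is equally present (and unaddressed) in the paper's version, and your phrasing about ``bounding the forward divergence below by the larger symmetrised divergence'' should read as bounding it \emph{above} --- replacing $\KL(p_i,p_j)$ by the larger $\KL_{\M_i|\M_j}$ preserves the inequality $\geq \log(1/2.4\delta)$ and only weakens the final bound on $\EV_{\M_i}[\tau]$, which is what the statement requires.
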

\begin{proof}
    The derivations are adapted from the proof of Proposition 2 in~\citet{al2021navigating}. First, we report a sample complexity result on best policy identification with a generative model~\citep{al2021adaptive}, which, for any $\M_j \in \M_{-i}$, states that
    \begin{equation}
        \sum_{s, a} \EV_{\M_i} [N_{\tau} (s, a)] \KL_{\M_i | \M_j} (s,a) \geq \text{kl} (\delta, 1 - \delta)
        \label{eq:generative_model_sample_complexity}
    \end{equation}
    where $N_{\tau} (s, a)$ is the number of samples of the $(s, a)$ pair collected within $\tau \in \mathbb{N}$ steps and $\text{kl} (x, y)$ denotes the Kullback Leibler divergence between Bernoulli distributions with parameters $x, y$ respectively. Differently from the generative model setting, we have to enforce MDP constraints on $\EV_{\M_i} [N_{\tau} (s, a)]$, which gives the recursive expression
    \begin{align*}
        &\text{if } mod(\tau, T) \neq 0
        & \sum_{a} \EV_{\M_i} [ N_\tau (s, a)] = \sum_{s', a'} p_{\M_i} (s | s', a')  \left(\EV_{\M_i} [N_{\tau - 1} (s', a')] + 1 \right), \quad \forall (s, a) \in \S \times \A \\
        &\text{else}
        &\sum_{a} \EV_{\M_i} [ N_\tau (s_1, a) ] = \sum_{a}\EV_{\M_i} [ N_{\tau - 1} (s_1, a) ] + 1 \quad \text{and} \quad \EV_{\M_i} [ N_\tau (s, a) ] = \EV_{\M_i} [ N_{\tau - 1}  (s, a)], \quad \forall s \neq s_1.
    \end{align*}
    Then, we can combine the latter constraints with \eqref{eq:generative_model_sample_complexity} to write the following optimization problem
    \begin{align}
        &\inf_{n \geq 0} \sum_{s, a, t} n_{sat} \label{eq:prob_obj} \\
        &\text{subject to } \sum_{s,a,t} n_{sat} \KL_{\M_i | \M_j} (s, a) \geq \text{kl} (\delta, 1 - \delta)
        & \forall \M_j \in \M_{-i} \label{eq:prob_cons1} \\
        &\hspace{1.7cm} \sum_{a} n_{sat} = \sum_{s',a'} p_{\M_i} (s | s', a') (n_{s'a't-1} + 1)
        & \forall s, \forall t : mod(t, T) \neq 0 \label{eq:prob_cons2} \\
        &\hspace{1.7cm} \sum_a n_{s_1 at} = \sum_a n_{s_1 at-1} + 1
        & \forall t : mod(t, T) = 0 \label{eq:prob_cons3} \\
        &\hspace{1.7cm} n_{sat} = n_{sat-1}
        & \forall s \neq s_1, \forall t : mod(t, T) = 0 \label{eq:prob_cons4}
    \end{align}
    To prove the result, let us take the constraint~\eqref{eq:prob_cons1}. Since it has to hold for every $\M_j \in \M_{-i}$, we can write
    \begin{equation*}
        \inf_{\M_j \in \M_{-i}} \sum_{s,a,t} n_{sat} \KL_{\M_i | \M_j} (s, a) \geq \text{kl} (\delta, 1 - \delta).
    \end{equation*}
    Let us denote $N^*$ the value of~\eqref{eq:prob_obj}, we write
    \begin{equation*}
        \inf_{\M_j \in \M_{-i}} \sum_{s,a,t} \frac{n_{sat}}{N^*} \KL_{\M_i | \M_j} (s, a) \geq \frac{\text{kl} (\delta, 1 - \delta)}{N^*}.
    \end{equation*}
    Through constraints (\ref{eq:prob_cons2}-\ref{eq:prob_cons4}) and the definition of $N^*$, we have that $(n_{sat} / N^*) \in \Sigma(\M_i)$, where the latter is the set of allocation vectors.
    Hence, we can write
    \begin{equation*}
        \sup_{\omega \in \Sigma(\M_i)} \inf_{\M_j \in \M_{-i}} \sum_{s,a} \omega (s,a) \KL_{\M_i | \M_j} (s, a) \geq \frac{\text{kl} (\delta, 1 - \delta)}{N^*}.
    \end{equation*}
    Finally, we know from~\citep[][Proposition 10]{al2021navigating} that $\EV_{\M_i} [\tau] \geq N^*$, which together with $\text{kl} (\delta, 1 - \delta) \geq \log(1 / 2.4 \delta)$~\citep[see][]{kaufmann2016complexity} gives the result.
\end{proof}

\section{Meta Learning in Bandits}
\label{apx:bayesian_bandits}

In this section, we analyze a simplified bandit version of the test-time regret minimization problem described in the paper. The aim of this study is to serve both as a gentle introduction to the more advanced results and techniques presented in the paper, which come more naturally in the bandit setting, as well as a standalone analysis that may be of independent interest.

We consider a class $\M = (\M_i)_{i \in [M]}$ of bandits~\cite{lattimore2020bandit}, each of them having a set of actions $\A = (a_j)_{j \in [A]}$ with corresponding reward distributions $R_i (a_j)$ for all $i \in [M], j \in [A]$ with bounded mean $\mu_{ij} \in [0, 1]$.
First, we rephrase the separation condition presented in the paper (Assumption~\ref{ass:separation_mdp}) as follows,
\begin{assumption}[$\lambda$-separation (bandit)]
\label{ass:separation_bandit}
    For any $\M_i, \M_j \in \M$, there exists $a \in \A$ such that $\| R_i (a) - R_j (a) \|_1 \geq \lambda.$
\end{assumption}

Just like in the MDP setting, we assume the reward distribution of all bandits $\M_i \in \M$, as well as the set $\M$ itself, to be fully known to the agent, who faces a test task (i.e., bandit) that is instead unknown but belonging to $\M$. To evaluate the agent's performance, we redefine the $H$-steps test-time regret for the task $\M_i \in \M$ as
\begin{equation*}
    \R_H (\M_i, \pi) = \EV \left[ \sum_{h = 1}^H R_i(a^*) - R_i(a_h) \right] = \EV \left[ \sum_{h = 1}^H \mu^* - \mu_h \right] 
\end{equation*}
where $a^* \in \argmax_{a \in \A} \mu (a)$ is the optimal action in the bandit $\M_i$, $a_h \in \A$ is the action played by policy $\pi$ at step $h$, and $\mu^*, \mu_h$ are the mean of their reward distribution, respectively.
Just as we did in the paper for the more general MDP setting, we provide both a lower bound and a nearly minimax optimal algorithm for the test-time regret minimization in bandits.

\subsection{Lower Bound}

We now prove a lower bound to the test-time regret suffered by any algorithm in the introduced meta learning in bandits setting under the above separation condition (Assumption~\ref{ass:separation_bandit}). Formally,
\begin{theorem}[Lower bound]
    Let $\M$ be a set of bandits for which Assumption~\ref{ass:separation_bandit} holds. Let $C < \infty$ a constant, and let $\delta \in (0, 1)$. For any horizon $ M - 1 \leq H \leq C$, it holds
    \begin{equation*}
        R_H (\M_i, \pi) = \Omega \left( \frac{M \log (H)}{\lambda} \log \left( \frac{1}{\delta} \right) \right)
    \end{equation*}
    with probability at least $1 - \delta$.
    \label{thr:bandit_lower_bound}
\end{theorem}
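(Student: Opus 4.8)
The plan is to mirror the argument for the MDP lower bound (Theorem~\ref{thr:mdp_lower_bound}) in the simpler bandit setting, where the absence of a horizon $T$ and of the MDP flow constraints makes the allocation argument considerably cleaner. First I would construct a hard instance of $M$ bandits, each carrying two families of arms: a set of \emph{exploitation} arms $a_1, \ldots, a_M$ such that in $\M_i$ the arm $a_i$ is optimal with mean $1$ while every $a_j$ with $j \neq i$ has mean $1 - \Delta_1$, where $\Delta_1 = 1/\sqrt{H}$; and a set of $M$ \emph{identification} arms that, in $\M_i$, are split into two contiguous groups $\mathcal{G}_1, \mathcal{G}_2$ (as for the right chain in Figure~\ref{fig:lb_instance_mdp}) with means $1 - \Delta_2$ and $1 - \Delta_2 - \lambda/2$ respectively, where $\Delta_2 = \log(H)/\sqrt{H}$. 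Since $\mathrm{Bern}(p)$ and $\mathrm{Bern}(q)$ are at $\ell_1$-distance $2|p-q|$, the shift of the group boundary across different bandits guarantees the $\lambda$-separation of Assumption~\ref{ass:separation_bandit}, whereas the exploitation arms remain nearly indistinguishable (gap only $\Delta_1$). Crucially $\Delta_1 < \Delta_2$, so identifying the test bandit on the identification arms is feasible but costly per pull.

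Next I would center the analysis on the event $\mathcal{E} = \{$best arm identified within $H$ pulls$\}$ and split into two cases. If $\P(\mathcal{E}) < 1 - \delta$ (the ``bad'' case), then with probability at least $\delta$ the committed arm is sub-optimal, so the post-stopping regret is at least $(H - h)\Delta_1$ with $h := \EV_{\M_i}[\tau]$, while the pre-stopping regret over the exploitation arms is exactly that of an $M$-armed needle-in-haystack bandit with gap $\Delta_1$. Here I would run a change-of-measure / Bretagnolle--Huber argument against the alternative bandit that relocates the optimum to the least-pulled exploitation arm, bounding its total pulls crudely by $h$, to obtain $\R_h \geq \frac{h\Delta_1}{8}\exp(-h\Delta_1^2/2)$; this is precisely the sub-result invoked at~\eqref{eq:regret_reduced_to_bandit} in the MDP proof. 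Summing the two contributions and optimizing in $h$ with $\Delta_1 = 1/\sqrt{H}$ (the minimum is at $h = H$) yields $\R_H = \Omega(\sqrt{H})$, showing that identification is necessary for any fast rate.

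In the complementary ``good'' case $\P(\mathcal{E}) \geq 1 - \delta$, identifying the test bandit is a best-arm-identification problem, and I would invoke the bandit analogue of Lemma~\ref{lemma:best_policy}: any $\delta$-PC strategy obeys $\EV_{\M_i}[\tau] \geq T^*(\M_i)^{-1}\log(1/2.4\delta)$ with $T^*(\M_i) = \sup_{\omega \in \Prob(\A)} \inf_{\M_j \neq \M_i} \sum_a \omega(a)\,\KL_{\M_i|\M_j}(a)$, which now carries no flow constraints. By Pinsker, $\KL_{\M_i|\M_j}(a) \geq \lambda^2$ on each revealing arm, and the group-partition construction ensures that for every alternative $\M_j$ at least two identification arms are revealing against $\M_i$; hence the uniform allocation $\omega(a) = 1/M$ over the $M$ identification arms achieves $T^*(\M_i) \geq 2\lambda^2/M$, i.e.\ $\EV_{\M_i}[\tau] \geq \frac{M}{2\lambda^2}\log(1/2.4\delta)$. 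Since every identification pull costs at least $\Delta_2$ in regret, $\R_H \geq \EV_{\M_i}[\tau]\,\Delta_2 = \frac{M}{2\lambda^2}\frac{\log H}{\sqrt{H}}\log(1/2.4\delta)$ with probability at least $1-\delta$. Finally, using $H \leq C$ together with $\lambda/2 + \log(H)/\sqrt{H} \leq 1$, which forces $\lambda \leq 2(\sqrt{C} - \log C)/\sqrt{H}$, I would trade one factor $1/\lambda$ for $\sqrt{H}/(2(\sqrt{C}-\log C))$, cancelling the $1/\sqrt{H}$ and delivering the claimed $\Omega\big(\frac{M\log(H)}{\lambda}\log(1/\delta)\big)$. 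Because $H \leq C$ is constant, the $\Omega(\sqrt{H})$ rate of the bad case is $\cO(1)$, so the good case is the binding one.

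The hard part will be the good-case allocation argument: one must engineer the group partition so that, after the adversarial infimum over alternatives, no allocation concentrated on few arms can beat the uniform one — a concentrated allocation can always be ``hacked'' by picking an alternative $\M_j$ that differs from $\M_i$ only on the under-sampled arms — which is exactly what forces the full factor $M$ into the sample complexity and, through $\EV_{\M_i}[\tau]\,\Delta_2$, into the regret. A secondary care point is keeping the needle-in-haystack bound and the BAI bound coherent on the \emph{same} instance and managing the event-conditioning so the two cases genuinely exhaust all algorithms; I expect this bookkeeping to be routine once the instance and the allocation geometry are fixed.
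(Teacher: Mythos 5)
Your proposal matches the paper's proof essentially step for step: the same hard instance with $M$ exploitation arms (gap $\Delta_1 = 1/\sqrt{H}$) and $M$ identification arms split into two $\lambda$-separated groups (gap $\Delta_2 = \log(H)/\sqrt{H}$), the same event $\mathcal{E}$ and two-case split, the same Bretagnolle--Huber needle-in-haystack bound in the bad case, and the same Garivier--Kaufmann BAI sample-complexity bound with the uniform allocation and Pinsker in the good case, finished by the identical $\lambda \lesssim (\sqrt{C}-\log C)/\sqrt{H}$ trade. The only (immaterial) deviations are your choice of Bernoulli rather than Gaussian reward distributions and constant factors.
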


\begin{proof}
    To prove the lower bound, we first construct a convenient instance in which it is hard to minimize the regret without knowing the task, while it is easy to identify the task playing sub-optimal actions. Then, we derive the lower bound on the regret suffered by any algorithm leveraging minimax lower bounds for standard bandits~\citep{lattimore2020bandit} and best arm identification results~\citep{garivier2016optimal}.
    
    Let $\M = (\M_i)_{i \in [M]}$ a problem instance in which every $\M_i$ has $|\A| = 2M$ actions and Gaussian reward distributions $R_i (a_j) = \mathcal{N} (\mu_{ij}, 1)$. For each $\M_i$, we specify the first set of $M$ actions $(a_j)_{j = 1}^{M}$ as follows: The action $a_i$ is the optimal action with mean $\mu_i = \mu^*$, while all of the other actions are slightly sub-optimal $\mu^* - \mu_j = \Delta_1 = 1 / \sqrt{H}$. The second set of actions $(a_j)_{j = M + 1}^{2M}$ is specified as follows: The actions $\mathcal{A}_1 := (a_{M + i}, \ldots, a_{\frac{3M}{2} + i})$ have mean reward  such that $\mu^* - \mu_a = \Delta_2 = \log(H) / \sqrt{H}, \forall a \in \A_1$, and all of the other actions $\mathcal{A}_2 := (a_{M + 1}, \ldots, a_{M + i - 1}) \cup (a_{\frac{3M}{2} + i + 1}, \ldots, 2M)$ have mean reward $\mu_a < \mu_{M + i}$ such that $\| R(a_{M + i}) - R (a) \|_1 \geq \lambda, \forall a \in \A_2$, fulfilling $\lambda$-separation. The instance is depicted in Figure~\ref{fig:lb_instance}.\footnote{Note that, for describing the instance, we conveniently consider $i \leq M / 2$, but it is straightforward to understand how it works for $i > M/2$ by substituting the exceeding indices in $\A_1$ back with the first arms of the sequence $a_{M + 1}, a_{M + 2}, \ldots$ until $\A_1$ consists of $M / 2$ arms.}

    In general, there are two ways to approach the described instance. Since it is known that the second set of $M$ actions is sub-optimal in every $\M_i$, we can minimize the regret playing only the first set of actions. Otherwise, we can exploit the separation condition on the second set of arms to identify the task and then playing the optimal arm. To formalize this intuition, we borrow notation from best arm identification literature~\citep[e.g.][]{garivier2016optimal} similarly as we did in Appendix~\ref{apx:best_policy_identification}.

    \paragraph{Additional Notation.}
    Let $\Hist_h := (a_{h'}, r_{h'})_{h' \in [h]}$ be a trajectory collected by executing a policy $\pi$. We denote $\F_{h \geq 1}$ the corresponding filtration on $\Hist_h$. We define
    \begin{itemize}
        \item $(\pi_{h'})_{h' = 1}^h$ a \emph{sampling rule} over $\A$ that determines the next action to play given past observations;
        \item $\tau$ a \emph{stopping rule} that gives the stopping time w.r.t. $\mathcal{H}_h$;
        \item $\hat a_\tau \in \A$ a \emph{decision rule}, which is the action selected when $\tau$ is triggered, i.e., the best guess on the optimal arm given past observations.
    \end{itemize}
    We denote as $\EV [\tau]$ the sample complexity of the best arm identification problem.
    Further, we restrict $\tau$ to the class of stopping rules such that $\EV_{\M_i} [\tau] \leq H$, and we define the following event:
    \begin{equation*}
        \mathcal{E} = \Big\{ \hat a_\tau \in \argmax\nolimits_{a \in \A} \mu(a) \ : \ \text{``best arm is identified within $H$ steps''} \Big\}.
    \end{equation*}
    To derive the lower bound, we consider the two cases in which $\mathcal{E}$ hold or does not hold with high probability, respectively.

    \begin{figure}[t]
        \centering
        \includegraphics[width=0.75\textwidth]{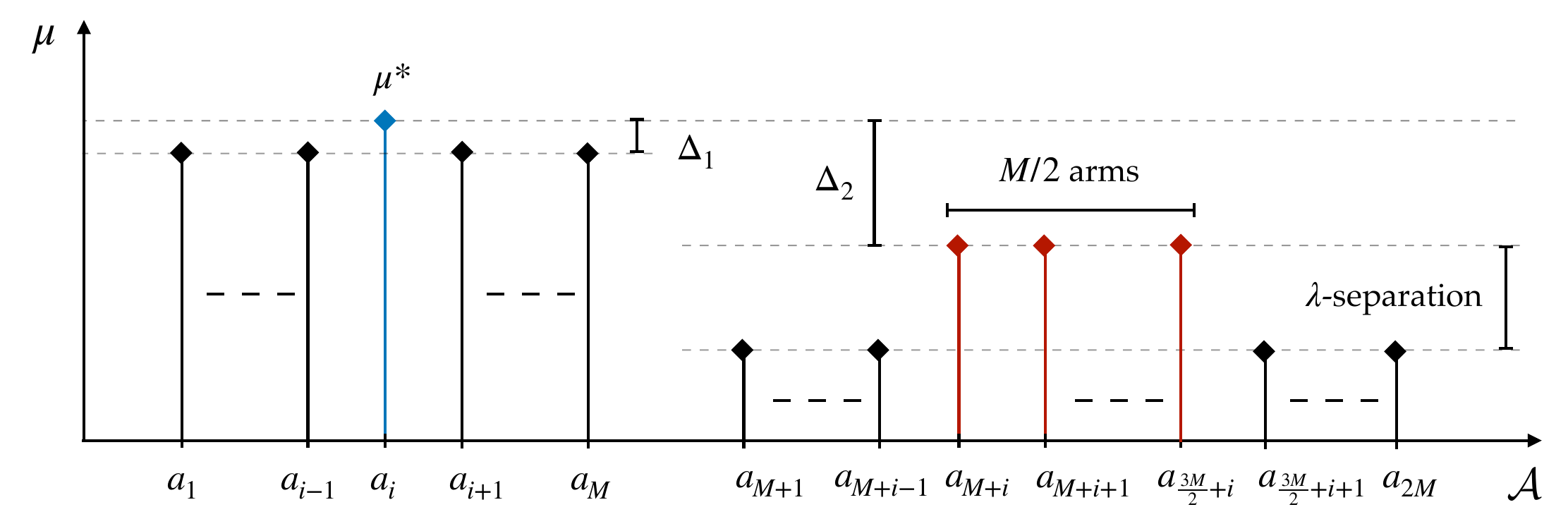}
        \caption{Visualization of the $\M_i$ bandit in the problem instance designed to derive the lower bound. The optimal action $a_i$ and the identifying actions $a \in \A_1 \cup \A_2$ change for every $\M_i$.}
        \label{fig:lb_instance}
    \end{figure}

    \paragraph{Bad Event.} If the event $\mathcal{E}$ does not hold with high probability, i.e., $\mathbb{P} (\mathcal{E}) < 1 - \delta$, we can show that the regret scales with $\Omega (\sqrt{H})$.

    Let us consider any triplet $(\pi, \tau, \hat a_\tau)$. Without loss of generality, we take $\EV_{\M_i}[\tau] = h$, from which we have
    \begin{equation}
    \label{eq:regret_bad_event}
        \R_{H} (\M_i, \pi) = \R_{h} (\M_i, \pi) + (H - h) \Delta_1
    \end{equation}
    with probability at least $\delta$.

    First, we lower bound the term $\R_{h} (\M_i, \pi)$ through regret minimization.
    We can restrict the action set to $\tilde\A = (a_j)_{j = 1}^M$ as there is no incentive to play surely sub-optimal actions $a_j$ for $j > M$ when minimizing the regret. We take a policy $\pi$ inducing pulls $(a_{h'})_{h' = 1}^h$ and corresponding counts $T_j (h) := \sum_{h' = 1}^h \mathbf{1} (a_j = a_{h'})$ over the actions $\tilde\A$ of $\M_1$. Then, we select $\M_i \in \M$ such that $i = \argmin_{j \in [M]} \EV_{\M_1} [T_j (h)]$. We have that
    \begin{equation*}
        \max_{\M^* \in \M} \R_h (\M^*, \pi) \geq \max \left\{ \R_h (\M_1, \pi), \R_h (\M_i, \pi) \right\} \geq \frac{\R_h (\M_1, \pi) + \R_h (\M_i, \pi)}{2}.
    \end{equation*}
    We can further expand the terms on the right hand-side by noting that
    \begin{equation*}
        \R_h (\M_1, \pi) \geq P_{\M_1} (T_1 (h) \leq h / 2) \frac{h \Delta_1}{2} \ \ \text{and} \ \ \R_h (\M_i, \pi) \geq P_{\M_i} (T_1 (h) > h / 2) \frac{h \Delta_1}{2}
    \end{equation*}
    from which we can write
    \begin{align*}
        \R_h (\M_1, \pi) + \R_h (\M_i, \pi) &> \frac{h \Delta_1}{2} \left( P_{\M_1} (T_1 (h) \leq h / 2) + P_{\M_i} (T_1 (h) > h / 2) \right) \\
        &\geq \frac{h \Delta_1}{4} \exp \left( - \text{KL} (\mathbb{P}_{\M_1}, \mathbb{P}_{\M_i}) \right)
    \end{align*}
    where the latter is obtained through the Bretagnolle-Huber inequality. Then, we can upper bound the KL divergence as
    \begin{align*}
        \text{KL} & (\mathbb{P}_{\M_1}, \mathbb{P}_{\M_i}) \\
        &= \EV_{\M_1} [T_1 (h)] \text{KL} (\mathcal{N} (\mu^*, 1), \mathcal{N} (\mu - \Delta_1, 1)) + \EV_{\M_1} [T_i (h)] \text{KL} (\mathcal{N} (\mu - \Delta_1, 1), \mathcal{N} (\mu^*, 1)) \\
        &\leq \frac{(\Delta_1)^2}{2} \left( \EV_{\M_1} [T_1 (h)] + \EV_{\M_1} [T_i (h)] \right) \leq \frac{ h (\Delta_1)^2}{2}
    \end{align*}
    from which we derive
    \begin{equation*}
        \max_{\M^* \in \M} \R_h (\M^*, \pi) 
        \geq \frac{h \Delta_1}{8} \exp \left( - \frac{ h (\Delta_1)^2 }{2} \right).
    \end{equation*}
    Finally, we substitute the latter in~\eqref{eq:regret_bad_event} to get
    \begin{equation*}
        \R_{H} (\M_i, \pi) \geq \frac{h \Delta_1}{8} \exp \left( - \frac{ h (\Delta_1)^2 }{2} \right) + (H - h) \Delta_1 \geq \frac{\sqrt{H}}{8} \exp \left( - \frac{1}{2} \right)
    \end{equation*}
    with probability at least $\delta$, where the last inequality is obtained by taking $\Delta_1 = 1 / \sqrt{H}$ and noting that the left-hand side is minimized for $\EV_{\M_i} [\tau] = h = H$. 

    \paragraph{Good Event.} The previous result states that the regret is at least $\Omega (\sqrt{H})$ when the event $\mathcal{E}$ does not hold with high probability. This hints that solving the best arm identification problem is necessary to minimize the regret. To derive the lower bound, we instantiate a proper best arm identification problem on the considered instance $\M$. 
    
    Since the separation condition is fulfilled in the second set of actions, we can restrict our best arm identification problem to the action set $\hat\A = (a_j)_{j = M + 1}^{2M}$. From Theorem 1 in~\citep{garivier2016optimal}, for any confidence $\delta \in (0, 1)$, we have that
    \begin{equation*}
        \EV\nolimits_{\M_i} [\tau] \geq T^* (\M_i)^{-1} \log (1 / 2.4 \delta)
    \end{equation*}
    where
    \begin{equation}
    \label{eq:t_star}
        T^* (\M_i)^{-1} := \sup_{\omega \in \Prob (\hat\A) } \inf_{\M_j \in \M \setminus (\M_i) } \left( \sum_{k = M + 1}^{2M} \omega_k \text{KL} (R_i(a_k), R_j (a_k)) \right)
    \end{equation}
    holds with probability $1 - \delta$.
    From the separation condition (Assumption~\ref{ass:separation_bandit}) and the Pinsker's inequality we have
    \begin{equation*}
        \text{KL} (R_i(a), R_j (a)) \geq 2 \text{TV}^2 (R_i(a), R_j (a)) \geq \frac{1}{2} \| R_i(a) - R_j (a) \|_1^2 \geq \frac{\lambda^2}{2}
    \end{equation*}
    for every $i \neq j$. By noting that the supremum in~\eqref{eq:t_star} is attained by $\omega = (1 / M, \ldots, 1 / M)$ we get
    \begin{equation*}
        \EV\nolimits_{\M_i} [\tau] \geq \frac{2M}{\lambda^2} \log \left( \frac{1}{2.4\delta} \right).
    \end{equation*}

    Finally, we can derive the lower bound through
    \begin{equation*}
        \R_H (\M_i, \pi) \geq \EV\nolimits_{\M_i} [\tau] \Delta_2 = \frac{2M}{\lambda^2} \frac{\log(H)}{ \sqrt{H}} \log \left( \frac{1}{2.4\delta} \right) \geq \frac{2}{\sqrt{C} - \log (C) } \frac{M \log (H)}{\lambda} \log \left( \frac{1}{2.4\delta} \right)
    \end{equation*}
    where the last inequality is obtained by exploiting $H \leq C$ and that the mean of the reward distribution is bounded in $[0, 1]$ to write $\lambda + \frac{\log (H)}{\sqrt{H}} \leq 1$, which gives $\lambda \leq \frac{\sqrt{C} - \log(C)}{\sqrt{H}}$.
\end{proof}

\subsection{Upper Bound}

In this section, we provide a simple algorithm, which is practically a direct adaptation to the bandit setting of Algorithm~\ref{alg:mdp_separation}, in turn inspired by~\citet{chen2021understanding}, that nearly matches the lower bound presented in the previous section.

The idea of the algorithm is to exploit the knowledge of the class $\M$ to quickly identify the test task $\M^*$ and then commit to the optimal action $a^*$ for $\M^*$. The pseudocode of this simple procedure is provided in Algorithm~\ref{alg:bandit}.

\begin{algorithm}[H]
    \caption{Identify-then-Commit for Bandits}
    \label{alg:bandit}
    \begin{algorithmic}[1]
        \STATE Initialize $\D = \M$ and $n = \frac{2 \log (2 M H)}{\lambda^4} $
        \WHILE{$|\D| > 1$}
            \STATE Draw $\M_1, \M_2$ from $\D$ randomly
            \STATE Take $\tilde a \in \argmax_{a \in \A} \| R_1 (a) - R_2(a) \|_1$
            \STATE Collect $n$ samples $\X = (x_1, \ldots, x_n)$ pulling action $\tilde a$
            \IF{$\prod_{x_h \in \X} \frac{R_1 (x_h | \tilde a)}{R_2 (x_h | \tilde a)} \geq 1$}
                \STATE Eliminate $\M_2$ from $\D$
            \ELSE
                \STATE Eliminate $\M_1$ from $\D$
            \ENDIF
        \ENDWHILE
        \STATE Take $\hat \M \in \D$ and pull the action $\hat a \in \argmax_{a \in \A} \hat \mu (a)$ for the remaining steps
    \end{algorithmic}
\end{algorithm}

The upper bound to the test-time regret suffered by Algorithm~\ref{alg:bandit} is provided by the following result.
\begin{theorem}[Upper bound]
    Let $\M$ be a set of bandits for which Assumption~\ref{ass:separation_bandit} holds. For any $H \geq M - 1$, we have
    \begin{equation*}
        R_H (\M_i, \hat \pi) = \cO \left( \frac{M \log (MH)}{\lambda^4} \right)
    \end{equation*}
    where $\hat \pi$ is the sampling rule induced by Algorithm~\ref{alg:bandit}.
    \label{thr:bandit_upper_bound}
\end{theorem}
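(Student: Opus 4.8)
The plan is to mirror the structure of the proof of Theorem~\ref{thr:mdp_upper_bound_separation}, exploiting that the bandit setting removes the reachability complications: since any action can be pulled directly, each elimination round uses exactly $n$ pulls and the ``Identify'' stage has \emph{deterministic} length. I would decompose the regret into the cost of the identification stage (lines 2--11 of Algorithm~\ref{alg:bandit}) and the cost of the commit stage (line 12), and bound each separately.

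First I would control the identification stage. The while loop removes exactly one candidate per iteration, so starting from $|\D|=M$ it runs for exactly $M-1$ iterations, each pulling the separating action $\tilde a$ for $n = \frac{2\log(2MH)}{\lambda^4}$ steps; hence the stage occupies a deterministic $h_0 = (M-1)n$ pulls. Because $\mu^* - \mu_h \leq 1$ at every step, the regret incurred here is at most $h_0 = (M-1)\frac{2\log(2MH)}{\lambda^4} = \cO\!\left(\frac{M\log(MH)}{\lambda^4}\right)$, which already matches the claimed rate.

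The core step is to show that the true task $\M_i$ survives all eliminations with probability at least $1 - 1/H$. The key observation is that $\M_i$ can only be discarded in a round where it is one of the two sampled candidates $\M_1,\M_2$; in any round where $\M_i$ is not drawn it is untouched. When $\M_i$ is drawn against some $\M_j$, the pulled action $\tilde a$ satisfies $\|(R_i-R_j)(\tilde a)\|_1 \geq \lambda$ by Assumption~\ref{ass:separation_bandit} and the $\argmax$ choice at line~4, so the samples $\X$ drawn from $R_i(\tilde a)$ feed a likelihood-ratio test between $\lambda$-separated distributions. I would invoke the bandit analogue of Lemma~\ref{lemma:mdp_elimination} to argue that, with $n = \frac{2\log(2MH)}{\lambda^4}$ samples, the test keeps $\M_i$ and discards $\M_j$ with probability at least $1 - \frac{1}{MH}$; a union bound over the $M-1$ rounds then yields $\P(\text{$\M_i$ eliminated}) \leq \frac{M-1}{MH} \leq \frac{1}{H}$.

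Finally I would handle the commit stage. Conditioned on $\M_i$ surviving (probability $\geq 1-1/H$), the unique remaining candidate is $\hat\M = \M_i$, so the algorithm commits to $a^* = \argmax_a \mu(a)$ and incurs zero regret over the remaining steps; in the complementary event (probability $\leq 1/H$) the commit regret is at most $H$, and if the horizon is exhausted already during identification the total regret is trivially at most $H$, itself within the bound. Thus the expected commit regret is at most $\frac{1}{H}\cdot H = \cO(1)$, and combining with the identification bound gives $R_H(\M_i,\hat\pi) = \cO\!\left(\frac{M\log(MH)}{\lambda^4}\right)$. I expect the main obstacle to be the elimination lemma: establishing that a threshold-$1$ likelihood-ratio test between reward distributions separated only in $\ell_1$ distance achieves per-comparison error $\leq \frac{1}{MH}$ with the stated $n$. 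This requires a concentration bound on $\sum_{h}\log\frac{R_i(x_h)}{R_j(x_h)}$, whose mean is $\KL(R_i(\tilde a),R_j(\tilde a)) \geq \lambda^2/2$ by Pinsker but whose log-likelihood increments may be poorly behaved; the calibration of $n$ at order $\lambda^{-4}\log(MH)$ rather than $\lambda^{-2}\log(MH)$ is precisely what absorbs this, exactly as in Lemma~\ref{lemma:mdp_elimination}.
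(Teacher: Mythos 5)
Your proposal is correct and follows essentially the same route as the paper's proof: the same two-stage regret decomposition with a deterministic identification length $h_0=(M-1)n$, the same union bound over the $M-1$ elimination rounds with per-round failure probability $1/(MH)$, and the same Pinsker-plus-concentration argument (the paper carries it out explicitly via Hoeffding on the log-likelihood-ratio sum) that calibrates $n$ at order $\lambda^{-4}\log(MH)$. The technical hurdle you flag at the end is precisely the step the paper resolves with Hoeffding's inequality, so nothing essential is missing.
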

\begin{proof}
    The scheme of the proof follows closely the one of~\citet[][Theorem 1]{chen2021understanding}, which is simplified and adapted to the bandit setting we care about here.

    First, we note that the Algorithm~\ref{alg:bandit} is made of two stages: An ``Identify'' stage (lines 2-11) in which we seek to find out the test task $\M_i$ irrespective of the regret, a ``Commit'' stage (line 12) in which we exploit the gathered information to minimize the regret in the remaining steps.
    Notably, at every iteration of the while loop (lines 2-11) a potential task is eliminated from the set $|\D|$, which means the ``Identify'' stage consists of exactly $h_0 := (M - 1) n$ steps, and the ``Commit'' stage takes the remaining $H - h_0$ steps.
    Thus, we can decompose the regret as
    \begin{equation}
        R_H (\M_i, \hat \pi) = \EV \left[ \sum_{h = 1}^{h_0} R_i(a^*) - R_i (\tilde a) \right] + \EV \left[ \sum_{h = h_0}^H R_i(a^*) - R_i (\hat a) \right].
        \label{eq:bandit_upper_bound_decomposition}
    \end{equation}
    Now, we just need to upper bound the term on the left with $h_0$ through $R_i(a^*) - R_i (\tilde a) \leq 1$ and to show that the second term is zero with high probability to prove the result.

    Since $\hat a$ is the optimal action of the remaining task in the set $\D$, to prove that it holds $\EV [\sum_{h = h_0}^H R_i(a^*) - R_i (\hat a) ] = 0$ with high probability, we have to show that the test task $\M_i$ is not eliminated from $\D$ with high probability. Especially, for some confidence $\delta \in (0, 1)$ we need
    \begin{equation*}
        \mathbb{P} \Big( \text{``$M_i$ is eliminated from $\D$''} \Big) = \mathbb{P} \left( \prod_{x_h \in \X} \frac{R_i (x_h | \tilde a)}{R_j (x_h | \tilde a)} < 1 \right) \leq \frac{\delta}{M}
    \end{equation*}
    where the right-hand side is obtained from a union bound over the event that the test task $\M_i$ is eliminated in each iteration of the while loop (lines 2-11). Equivalently, we need
    \begin{equation*}
        \log \left( \prod_{x_h \in \X} \frac{R_i (x_h | \tilde a)}{R_j (x_h | \tilde a)} \right) = \sum_{x_h \in \X} \log \left( \frac{R_i (x_h | \tilde a)}{R_j (x_h | \tilde a)} \right) > 0
    \end{equation*}
    to hold with probability at least $1 - \frac{\delta}{M}$.
    First, we note that
    \begin{align*}
        \EV_{x \sim R_i (\tilde a)} \left[ \sum_{x_h \in \X} \log \left( \frac{R_i (x | \tilde a)}{R_j (x | \tilde a)} \right) \right]
        = \sum_{x_h \in \X}  \EV_{x \sim R_i (\tilde a)} \left[ \log \left( \frac{R_i (x | \tilde a)}{R_j (x | \tilde a)} \right) \right] = n \text{KL} (R_i (\tilde a), R_j (\tilde a)) \leq \frac{n \lambda^2}{2}
    \end{align*}
    where the last inequality is obtained from the separation condition (Assumption~\ref{ass:separation_bandit}) and the Pinsker's inequality.
    Then, we have
    \begin{align*}
        \sum_{x_h \in \X} \log \left( \frac{R_i (x_h | \tilde a)}{R_j (x_h | \tilde a)} \right) 
        \geq \EV_{x \sim R_i (\tilde a)} \left[ \sum_{x_h \in \X} \log \left( \frac{R_i (x | \tilde a)}{R_j (x | \tilde a)} \right) \right] - \sqrt{\frac{n}{2} \log \left( \frac{2M}{\delta} \right)} \geq  \frac{n \lambda^2}{2} - \sqrt{\frac{n}{2} \log \left( \frac{2M}{\delta} \right)}
    \end{align*}
    with probability $1 - \frac{\delta}{M}$ through the Hoeffding's inequality. Now, we need to set $n$ such that the right-hand side is greater than zero, which gives $n = \frac{2 \log (\frac{2M}{\delta})}{\lambda^4}$ and $h_0 = \frac{2 (M - 1) \log (\frac{2M}{\delta})}{\lambda^4}$.
    
    Finally, we set $\delta = \frac{1}{H}$ and we plug the expression into~\eqref{eq:bandit_upper_bound_decomposition}. Noting that, in the bad event occurring with probability less than $1 / H$ the right-hand side of~\eqref{eq:bandit_upper_bound_decomposition} is still less than $H$, we have $\EV [ \sum_{h = h_0}^H R_i(a^*) - R_i (\hat a) ] \leq 1$ from which we get
    \begin{equation*}
        R_H (\M_i, \hat \pi) = \cO \left( \frac{M \log (MH)}{\lambda^4} \right).
    \end{equation*}
\end{proof}


\end{document}